\newtheorem{theorem}{Theorem}[section]
\newtheorem{proposition}{Proposition}
\newtheorem{lemma}{Lemma}
\newtheorem{defNew}{Definition}
\def\X{{\mathbf X}}
\def\cT{{\mathcal T}}
\def\1{{\mathbf 1}}
\newcommand{\clust}{\mathcal{C}}
\title{Hierarchical Clustering using Randomly Selected Similarities}
\author{
Brian Eriksson \\
{Technicolor Research}\\
\texttt{brian.c.eriksson@gmail.com}
}
\begin{document}

\maketitle

\begin{abstract}
The problem of hierarchical clustering items from pairwise
similarities is found across various scientific disciplines, from biology to networking. Often, applications of clustering techniques are limited by the cost of obtaining similarities
between pairs of items.  While prior work has been developed to reconstruct clustering using
a significantly reduced set of pairwise similarities via adaptive measurements,
these techniques are only applicable when choice of similarities are available to the user. In
this paper, we examine reconstructing hierarchical clustering under similarity observations at-random.  We derive precise bounds which show that a significant fraction of the hierarchical clustering can be recovered using fewer than all the pairwise similarities.
We find that the correct hierarchical clustering down to a constant fraction of the total number of items ({\em i.e.,} clusters sized $O\left(N\right)$) can be found using only $O\left(N\log{N}\right)$ randomly selected pairwise similarities in expectation.
\end{abstract}

\section{Introduction}
\label{sec:intro}

%Introduce hierarchical clustering

%Prior work on efficient similarity clustering

Hierarchical clustering based on pairwise similarities arises
routinely in a wide variety of engineering and scientific problems.
These problems include inferring gene behavior from microarray data
\cite{genePaper}, Internet topology discovery \cite{yale}, detecting
community structure in social networks \cite{socioPaper},
advertising \cite{advertPaper}, and database management
\cite{Chaudhuri,Arasu}.  Often there is a significant cost
associated with obtaining each similarity value.  This cost can
range from computation time to calculate each pairwise similarity
({\em e.g.,} phylogenetic tree reconstruction using amino acid
sequences \cite{phylTrees}) to measurement load on the system under
consideration ({\em e.g.,} Internet topology discovery using
tomographic probes \cite{yale}).  In addition, situations where the
similarities require an expert human to perform the comparisons
results in a significant cost in terms of time and patience of the
user ({\em e.g.,} human perception experiments in~\cite{advertPaper}).

Prior work has attempted to develop efficient hierarchical
clustering methods \cite{Hofmann98activedata,Grira08}, but
many proposed techniques are heuristical in nature and do not provide
any theoretical guarantees.  Derived bounds are available in \cite{eriksson11} which robustly finds clusters down to $O\left(\log{N}\right)$.  The main limitation to this approach is that it is only
applicable for problems where the user has control over the specific
pairs of items to query and when the pairwise similarities are
acquired in an online fashion ({\em i.e.,} one at a time, using past
information to inform future samples). In many situations, either
this control is not available to the user, or a subset of similarities are
acquired in a batch setting ({\em e.g.,} recommender systems problems~\cite{netflixPrize}).  This
motivates resolving the hierarchical
clustering given a selected number of pairwise similarities observed
at-random, where adaptive control is not available.  Specifically, we look to answer the following question: {\em How many similarities observed at-random are required to reconstruct the hierarchical clustering?}  The work in \cite{robustAlg} developed a novel clustering technique to approximately resolve clusters down to size $O\left(N\right)$ from random observations, in contrast we look to reconstruct the clustering hierarchy (to some pruning) exactly with high probability and also examine discovering clusters of size significantly less than $O\left(N\right)$.

While results in \cite{eriksson11} indicate that resolving the {\em entire} hierarchical clustering using a sampling at-random regime requires effectively all the pairwise similarities, we find that a significant fraction of the clustering hierarchy can be resolved accurately.  Specifically, we resolve the similarity sampling rate required given a desired level of clustering resolution.  The only restriction we will place on the observed pairwise similarities are that they satisfy the {\em Tight Clustering} (TC) condition, which states that intracluster similarity values are greater than intercluster similarity values.  This sufficient condition is required for any minimum-linkage clustering procedure, and can commonly be found underlying branching processes  where the similarity between items is a monotonic increasing function of the distance from the branching root to their nearest common branch point (such as clustering resources in the Internet \cite{treeNess}), or when similarity is defined by density-based
distance metrics \cite{Sajama}.

Our results show that the hierarchy down to clusters sized a constant fraction of the total number of items ({\em i.e.,} $O\left(N\right)$) can be resolved with only $O\left(N\log{N}\right)$ pairwise similarities observed at-random on average.  To find smaller clusters of size $O\left(N^\beta\right)$, where $\beta\in\left(0,1\right)$, we derive bounds which show that only $O\left(N^{2-\beta}\log{N}\right)$ similarities are required in expectation.

The paper is organized as follows.  The hierarchical clustering
problem and problems associated with randomly selected pairwise
similarities are introduced in Section~\ref{sec:hier}.  Our derived bounds are presented in
Section~\ref{sec:method}. Finally, concluding remarks are made in Section~\ref{sec:summary}.

\section{Hierarchical Clustering and Notation}
\label{sec:hier}

%Tight Clustering

%Incomplete Agglomerative Clustering methodology (needed?)

%Global random similarity bounds (i.e., why global won't work) cite our AIStats proposition

Let $\X = \{x_1,x_2,\ldots,x_N\}$ be a collection of $N$ items which has an underlying {\em hierarchical clustering} denoted as ${\cal T}$.
\begin{defNew}
A \textbf{cluster} $\clust$ is defined as any subset of $\X$. A
collection of clusters $\cT$ is called a \textbf{hierarchical
clustering} if $\cup_{\clust_i\in\cT}\clust_i = \X$ and for any
$\clust_i,\clust_j \in \cT$, only one of the following is true
\textbf{(i)} $\clust_i\subset\clust_j$, \textbf{(ii)} $\clust_j
\subset\clust_i$, \textbf{(iii)} $\clust_i\cap\clust_j = \emptyset$.
\end{defNew}
Without loss of generality, we will consider $\cT$ as a complete binary tree, with $N$ leaf nodes and where every $\clust_k\in\cT$ that is not a leaf of the tree, there
exists proper subsets $\clust_i$ and $\clust_j$ of $\clust_k$, such
that $\clust_i\cap\clust_j = \emptyset$, and $\clust_i\cup\clust_j =
\clust_k$.

Our measurements will be from ${\bf S} = \{s_{i,j}\}$ the collection of all pairwise
similarities between the items in ${\bf X}$, with $s_{i,j}$ denoting
the similarity between $x_i$ and $x_j$ and assuming
$s_{i,j}=s_{j,i}$.  The similarities must conform to the hierarchy of $\cT$ through the following sufficient condition.
\begin{defNew}
\label{def:cl} The triple $({\bf X},\mathcal{T},{\bf S})$ satisfies
the \textbf{Tight Clustering (TC) Condition} if for every set of
three items $\{x_i,x_j,x_k\}$ such that $x_i,x_j \in \clust$ and
$x_k \not\in\clust$, for some $\clust \in \cT$, the pairwise
similarities satisfies, $s_{i,j} >
\max\left(s_{i,k},s_{j,k}\right)$.
\end{defNew}

In words, the TC condition implies that the similarity between all
pairs within a cluster is greater than the similarity with respect
to any item outside the cluster.  Under the TC condition, the tree found by agglomerative clustering will match the true clustering hierarchy, ${\cal T}$.  Minimum-linkage agglomerative clustering~\cite{statLearning} is a recursive process that begins with singleton
clusters ({\em i.e.,} the $N$ individual items to be clustered). At
each step of the algorithm, the pair of most similar clusters
associated with the largest observed pairwise similarity are merged.
The process is repeated until all items are merged into a single
cluster.  The main drawback to this technique is that it requires knowledge of all $O\left(N^2\right)$ pairwise similarities value ({\em i.e.,} all values must be known to find the maximum), therefore this methodology will be infeasible for problems where $N$ is large, or where there is a significant cost to obtaining each similarity.

%Fortunately, the TC condition will also allow us to
%use fewer than the complete set of pairwise similarities
%to resolve the hierarchical clustering.

To reduce the measurement cost, we consider an incomplete observation of pairwise similarities.  For our specific model, we define the indicator
matrix of similarity observations, $\Omega$, such that
$\Omega_{i,j}=1$ if the pairwise similarity $s_{i,j}$ has been
observed and $\Omega_{i,j}=0$ if the pairwise similarity $s_{i,j}$
is not observed ({\em i.e.,} unknown). The pairwise similarities are observed {\em uniformly at-random}, defining the similarity observation matrix as,
\begin{eqnarray}
\begin{array}{cc}
P\left(\Omega_{i,j}=1\right) = p & : \forall{i,j}
\end{array}
\end{eqnarray}
For some probability, $p>0$.

%The main question becomes, using the set of observed similarities indicated by $\Omega$, can we resolve the hierarchical clustering ${\cal T}$?

%We can consider using off-the-shelf hierarchical clustering
%methodologies, such as bottom-up agglomerative clustering
%\cite{statLearning}, on the set of pairwise similarities that
%satisfies the TC condition.   It is easy to see that if the TC
%condition is satisfied, then the standard bottom-up agglomerative
%clustering algorithms such as single linkage, average linkage and
%complete linkage will all produce $\cT$ given the complete
%similarity matrix ${\bf S}$. Various agglomerative clustering
%algorithms differ in how the similarity between two clusters is
%defined, but every technique requires all $N(N-1)/2$ pairwise
%similarity values since all similarities must be compared at the
%very first step.

To reconstruct the clustering from incomplete measurements, this paper focuses on a slightly modified version of
minimum-linkage agglomerative clustering.  This process can be considered
off-the-shelf agglomerative clustering where the pairwise
similarities not observed are simply ignored ({\em i.e.,} unobserved similarities are zero-filled).  The methodology is described in
Algorithm~\ref{alg:incompleteAgg}.

\begin{algorithm}[h]
\caption{- {\tt incompleteAgglomerative}$\left({\bf X}, {\bf S},
\Omega\right)$}\label{alg:incompleteAgg} {\textbf {Given :}}
\begin{enumerate}
\item Set of items, ${\bf X} = \{x_1,x_2,...x_N\}$.
\item Matrix of pairwise similarities, ${\bf S}$.
\item Indicator matrix of observed pairwise similarities, $\Omega$.
\end{enumerate}
{\textbf {Clustering Process : }}
\begin{description}
\item Initialize clustering tree ${\cal T} =
\{{\cal C}_1,{\cal C}_2,...,{\cal C}_N\}$, where ${\cal C}_i = x_i$ for all $i=\{1,2,...N\}$.
\item Zero-fill unobserved pairwise similarities, $S_{i,j} = 0$ if $\Omega_{i,j} = 0$.
\item {\textbf {While}} $\max_i\left|{\cal C}_i\right|<N$
\begin{enumerate}
\item Find ${\widehat{i},\widehat{j}} = \mbox{arg}\max_{i,j}\left(S_{i,j}\right)$
\item Merge the items ${\widehat{i},\widehat{j}}$ in the  hierarchical clustering ${\cal T}$, and matrix of
pairwise similarities ${\bf S}$.
\begin{enumerate}
\item Set $S_{\widehat{i},k} = \max\{S_{\widehat{i},k},S_{\widehat{j},k}\}$ for all $k=\{1,2,...,N\}$.
\item Set $S_{\widehat{j},k} = 0$ for all $k=\{1,2,...,N\}$.
\item Set ${\cal C}_{\widehat{i}} = \{{\cal C}_{\widehat{i}}, {\cal C}_{\widehat{j}}\}$
\item Set ${\cal C}_{\widehat{j}} = \{\}$
\end{enumerate}
\end{enumerate}
\end{description}
{\textbf {Output : }}
\begin{description}
\item Return the estimated hierarchical clustering, ${\cal T}$.
\end{description}
\end{algorithm}

%The question becomes, using the quadruple, $({\bf
%X},\mathcal{T},{\bf S}, \Omega)$ and the {\tt incompleteAgglomerative}
%clustering algorithm, can we accurately reconstruct the hierarchical
%clustering ${\cal T}$?

\subsection{Canonical Subproblem}

The intuition for why an incomplete subset of pairwise similarities is useful can be seen when considering the canonical subproblem where a single cluster ${\cal C}$ is split into two subclusters ${\cal C}_L,{\cal C}_R$ (where ${\cal C}={\cal C}_L\bigcup{\cal C}_R$ and ${\cal C}_L\bigcap{\cal C}_R=\emptyset)$.  In order to properly resolve the two clusters, it is necessary for the agglomerative clustering algorithm to have enough pairwise similarities to make an informed decision as to which items belong to which subcluster.  We define a sampling graph, ${\cal G}$, resolved from the pairwise similarity observation matrix $\Omega$.

\begin{defNew}
Consider the $N \times N$ pairwise similarity observation matrix $\Omega$, then the \textbf{sampling graph} ${\cal G}=\{{\bf V},{\bf E}\}$ is defined as graph with $\left|{\bf V}\right|=N$ nodes where the edge $e_{i,j}=1$ if $\Omega_{i,j}=1$ ({\em i.e.,} the pairwise similarities was observed) and $e_{i,j}=0$ otherwise ({\em i.e.,} the pairwise similarity was not observed).
\end{defNew}

%Therefore, in order to distinguish between clusters ${\cal C}_L$ and ${\cal C}_R$, it is necessary for at least a single path to be observed between the set of items in each cluster.

In the context of the sampling graph, ${\cal G}$, we can state the following proposition with respect to resolving the canonical subproblem.

\begin{proposition}
\sloppypar
\label{prop:path} Consider a cluster ${\cal C}$ consisting of two subclusters, ${\cal C}_L$ and ${\cal C}_R$ (such that ${\cal C}_L \bigcup {\cal C}_R = {\cal C}$ and ${\cal C}_L \bigcap {\cal C}_R = \emptyset$). Then, the agglomerative clustering algorithm will resolve the two subclusters if and only if the sampling subgraphs associated with each subcluster ({\em i.e.,} ${\cal G}_L$ for cluster ${\cal C}_L$ and ${\cal G}_R$ for cluster ${\cal C}_R$) are both connected.
\end{proposition}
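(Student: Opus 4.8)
The plan is to read Algorithm~\ref{alg:incompleteAgg} as single-linkage clustering run on the zero-filled similarity matrix, and to control its merge order with one consequence of the TC condition. Writing $\mathcal{C}=\mathcal{C}_L\cup\mathcal{C}_R$, the property I would extract first is a \emph{domination} fact: restricted to the items of $\mathcal{C}$, every intracluster similarity (both endpoints in $\mathcal{C}_L$, or both in $\mathcal{C}_R$) strictly exceeds every intercluster similarity between $\mathcal{C}_L$ and $\mathcal{C}_R$; let $\tau$ denote the largest such intercluster value. This is the ordering the TC condition is meant to enforce, and since every observed similarity is positive while every unobserved entry is zero-filled, an observed edge of $\mathcal{G}_L$ or $\mathcal{G}_R$ always carries weight strictly above $\tau$, whereas every cross pair carries weight at most $\tau$.

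For the ``if'' direction, suppose $\mathcal{G}_L$ and $\mathcal{G}_R$ are both connected. I would argue by tracking purity: call a current cluster \emph{pure} if it lies entirely inside $\mathcal{C}_L$ or entirely inside $\mathcal{C}_R$. Initially all singletons are pure, and I claim no merge produces an impure cluster until $\mathcal{C}_L$ and $\mathcal{C}_R$ have each been assembled in full. Consider the first merge that would join a pure $\mathcal{C}_L$-cluster $A$ to a pure $\mathcal{C}_R$-cluster $B$; its single-linkage value is an observed cross similarity, hence at most $\tau$. If $A$ is a proper subset of $\mathcal{C}_L$, then connectivity of $\mathcal{G}_L$ forces an observed edge from $A$ to $\mathcal{C}_L\setminus A$, and by the domination property that edge has weight strictly greater than $\tau$; therefore the single-linkage similarity between $A$ and the pure $\mathcal{C}_L$-cluster containing the other endpoint exceeds the value of the proposed cross merge. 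This contradicts the greedy rule that always merges the globally largest similarity, so the cross merge cannot be selected while $A\neq\mathcal{C}_L$; the symmetric argument rules out $B$ being a proper subset of $\mathcal{C}_R$. Hence the first cross merge can occur only once $A=\mathcal{C}_L$ and $B=\mathcal{C}_R$, meaning both subclusters have been recovered exactly and are then merged into $\mathcal{C}$.

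For the ``only if'' direction I would argue the contrapositive and assume, without loss of generality, that $\mathcal{G}_L$ is disconnected. Pick a connected component $A$ of $\mathcal{G}_L$, so $A$ is a proper nonempty subset of $\mathcal{C}_L$ with no observed edge between $A$ and $\mathcal{C}_L\setminus A$; thus in the zero-filled matrix the single-linkage similarity between any pure cluster inside $A$ and any pure cluster inside $\mathcal{C}_L\setminus A$ is identically zero. Consequently the merge that would finally attach $A$ to the remainder of $\mathcal{C}_L$ carries value zero and is never preferred over a strictly positive alternative: any observed similarity from $A$ to $\mathcal{C}_R$ is selected first, absorbing $A$ into an impure cluster before $\mathcal{C}_L$ is ever formed, and even when no such cross edge is observed the zero-valued attachment is merely tied with cross attachments, so clean recovery is not guaranteed. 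In either case the algorithm fails to resolve $\mathcal{C}_L$ and $\mathcal{C}_R$, which establishes the equivalence.

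I expect the main obstacle to be the ``if'' direction, and specifically the correct use of the TC condition. The delicate point is that connectivity of $\mathcal{G}_L$ guarantees only \emph{some} observed crossing edge of the relevant cut, so the argument must combine this with the domination property to certify that this particular observed edge already outweighs the best available cross merge; invoking the ordering for the right pair of items, rather than a weaker per-endpoint comparison, is where the TC condition does its real work. The bookkeeping of cluster purity and the induction on the first impure merge is routine by comparison, as is the connected-components reasoning used in the ``only if'' direction.
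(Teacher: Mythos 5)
Your proof follows the same route as the paper's own justification --- the paper never gives a formal proof of this proposition, only the intuition paragraph and the two figures, and that intuition is precisely your purity induction plus connected-components argument --- but the step you yourself identify as the crux is a genuine gap. The ``domination fact'' (every intracluster similarity strictly exceeds every $\mathcal{C}_L$--$\mathcal{C}_R$ similarity, so that observed intra edges beat the threshold $\tau$) is \emph{not} implied by the TC condition of Definition~\ref{def:cl}. TC constrains only triples: it compares $s_{i,j}$ with cross similarities $s_{i,k},s_{j,k}$ that \emph{share an endpoint} with the pair $(i,j)$; it never compares $s_{i,j}$ with $s_{a,k}$ for a third in-cluster item $a\notin\{i,j\}$, and no chain of TC inequalities fills this in. Concrete counterexample: $\mathcal{C}_L=\{x_1,x_2,x_3\}$ (with subclusters $\{x_1,x_2\}$ and $\{x_3\}$), $\mathcal{C}_R=\{x_4\}$, and $s_{1,2}=0.9$, $s_{2,3}=0.8$, $s_{1,3}=0.5$, $s_{2,4}=0.7$, $s_{1,4}=s_{3,4}=0.1$. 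Every TC triple is satisfied ($0.9>\max(0.5,0.8)$, $0.9>\max(0.1,0.7)$, $0.5>\max(0.1,0.1)$, $0.8>\max(0.7,0.1)$), yet the cross similarity $s_{2,4}=0.7$ exceeds the intracluster similarity $s_{1,3}=0.5$. So no threshold $\tau$ separates intra from cross values, and your key comparison collapses.

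The gap is fatal rather than cosmetic, because with this similarity matrix the ``if'' direction itself fails. Observe exactly the pairs $(1,3)$, $(2,3)$, $(2,4)$: then $\mathcal{G}_L$ is connected (a path through $x_3$) and $\mathcal{G}_R$ is a singleton, so your hypothesis holds. Algorithm~\ref{alg:incompleteAgg} first merges $\{x_2,x_3\}$ at value $0.8$; the updated single-linkage values are $0.7$ toward $x_4$ (observed) and $0.5$ toward $x_1$, so the next merge produces the impure cluster $\{x_2,x_3,x_4\}$ and $\mathcal{C}_L$ is never resolved. (Demanding connectivity of the sampling graph of every cluster at every level of the hierarchy does not rescue this either: observing $(1,2),(1,3),(2,4)$ gives the same failure.) So under TC as formally defined, connectivity of $\mathcal{G}_L$ and $\mathcal{G}_R$ is necessary --- your ``only if'' argument is fine --- but not sufficient; the ``if'' direction is true only if the intra-over-cross domination property is taken as an explicit \emph{assumption} rather than derived from TC. This is the same silent leap the paper makes when it glosses TC as ``all of these pairwise similarities will be greater than any other item outside of this cluster,'' so your write-up faithfully reproduces the paper's reasoning; but as a proof it inherits, and makes visible, the same hole.
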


The intuition behind this proposition is as follows.  Any clustering procedure requires enough information to associate each item with the cluster it belongs to.  For example, minimum-linkage agglomerative clustering would require that each item observe at least a single similarity with another item in that cluster.  But this is not enough, as we also require that one of these two items have an observed similarity with another item in the remainder of the cluster ({\em i.e.,} one of the items in the cluster, not including the two items paired together), and so on until the all the items can be clustered.  In terms of the sampling graph, this is the requirement that a path can be found between items in the same cluster (as all of these pairwise similarities will be greater than any other item outside of this cluster, as stated using the TC condition).  It is then obvious that a cluster of items will only be returned if the sampling graph is connected between those items.  An example of this clustering can be found in Figure~\ref{fig:connectedTree}.

\begin{figure*}[htb]
\begin{minipage}[h]{0.32\linewidth}
   \centerline{\epsfig{figure=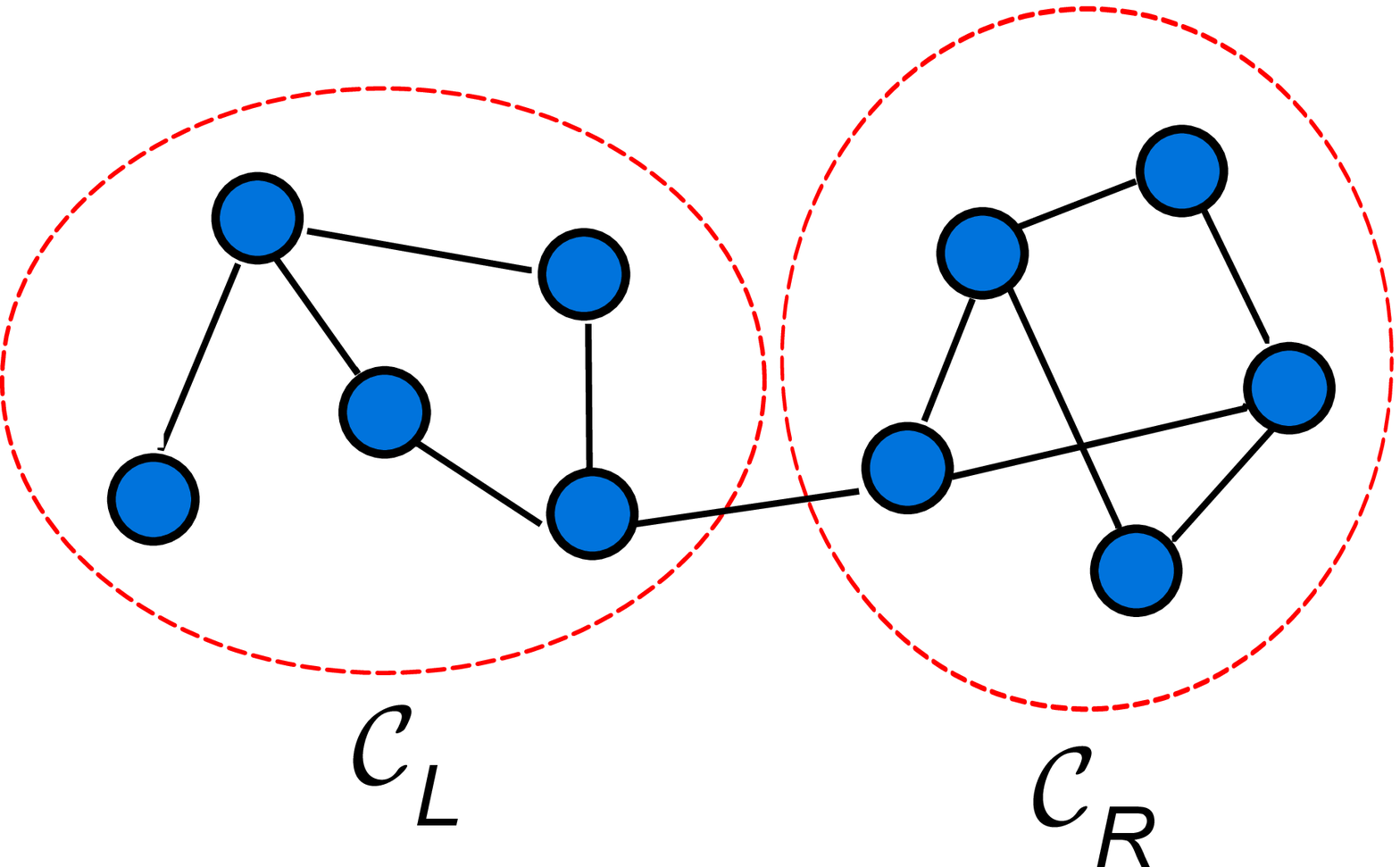,width=4.5cm}}
  \centerline{\textbf{(A)}}
\end{minipage}
\begin{minipage}[h]{0.32\linewidth}
  \centerline{\epsfig{figure=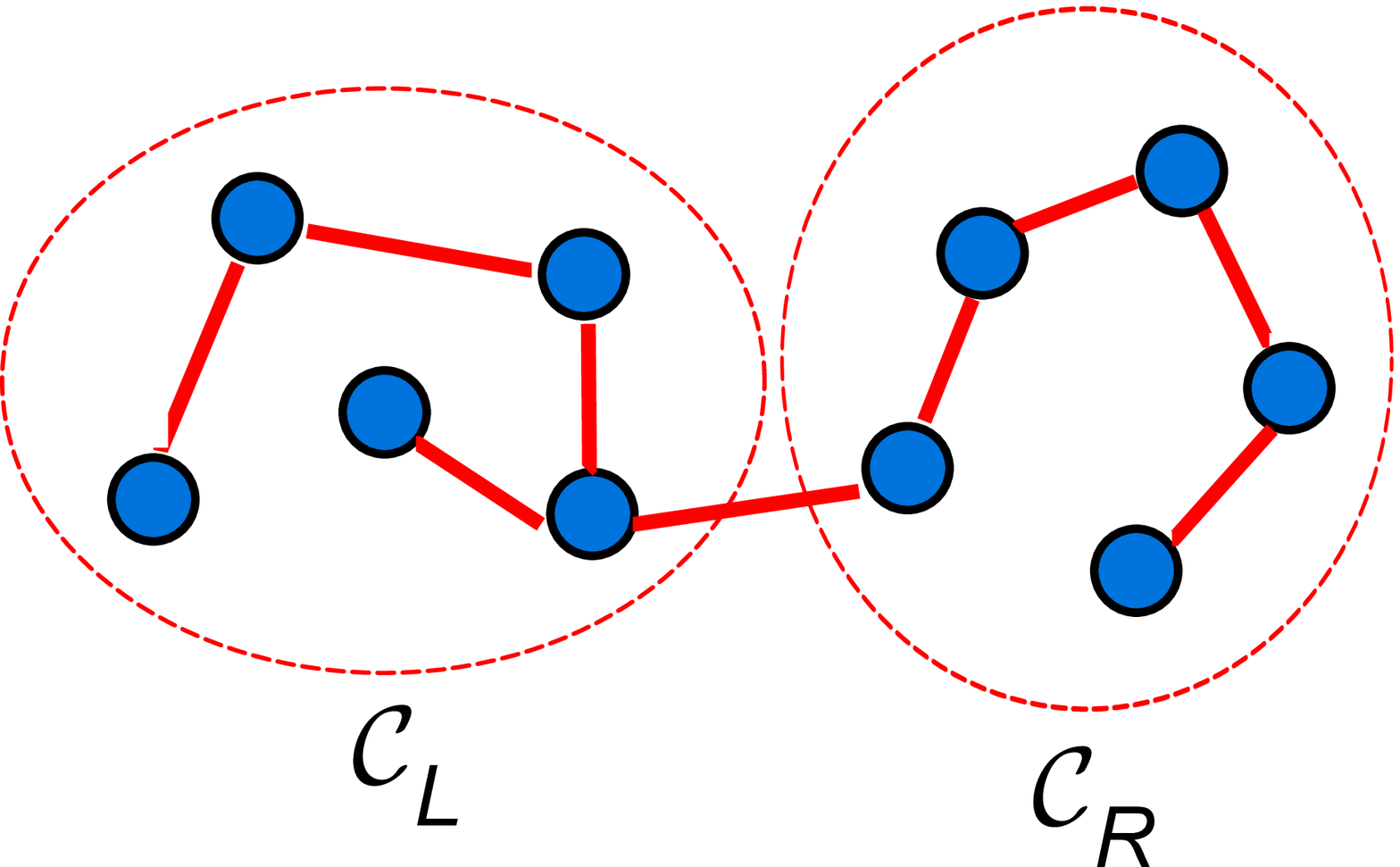,width=4.5cm}}
  \centerline{\textbf{(B)}}
\end{minipage}
\begin{minipage}[h]{0.32\linewidth}
  \centerline{\epsfig{figure=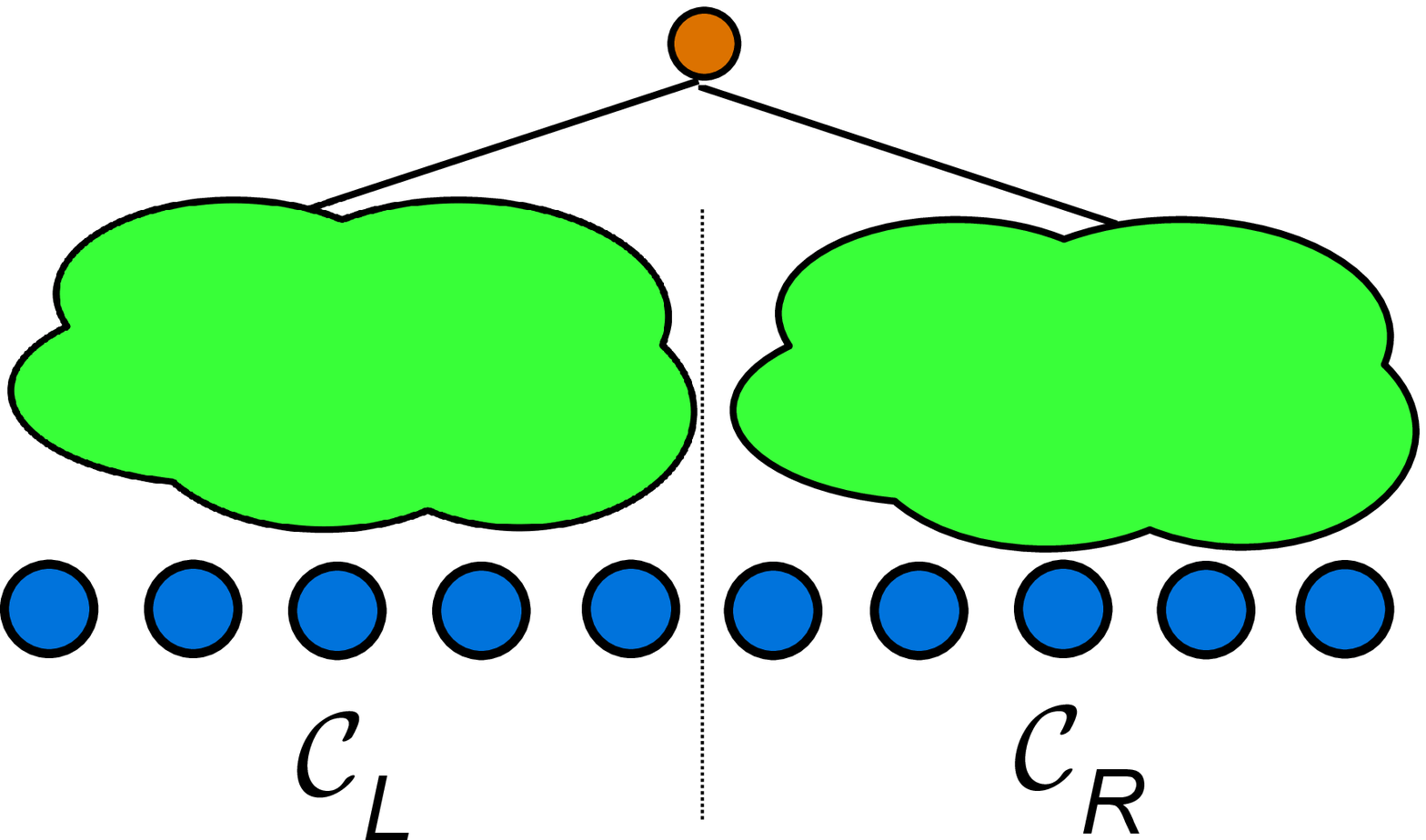,width=4.5cm}}
  \centerline{\textbf{(C)}}
\end{minipage}
\caption{\label{fig:connectedTree} (A) - Dense sampling graph on two clusters ${\cal C}_L,{\cal C}_R$, where an edge between nodes indicates that we have observed the pairwise similarity between those items, (B) - Example connected path found through the sampling graph using agglomerative clustering, (C) - Resulting correct hierarchical clustering.}
\end{figure*}

Alternatively, if a cluster of items is disconnected, into two sampling graph connected components ${\cal C}_A,{\cal C}_B$ (where ${\cal C}_A \bigcup {\cal C}_B = {\cal C}$ and ${\cal C}_A \bigcap {\cal C}_B = \emptyset$), then the clustering procedure will not have enough information to merge the items into a single cluster.  An example of this incorrect clustering can be found in Figure~\ref{fig:disconnectedTree}.

\begin{figure*}[htb]
\begin{minipage}[h]{0.32\linewidth}
   \centerline{\epsfig{figure=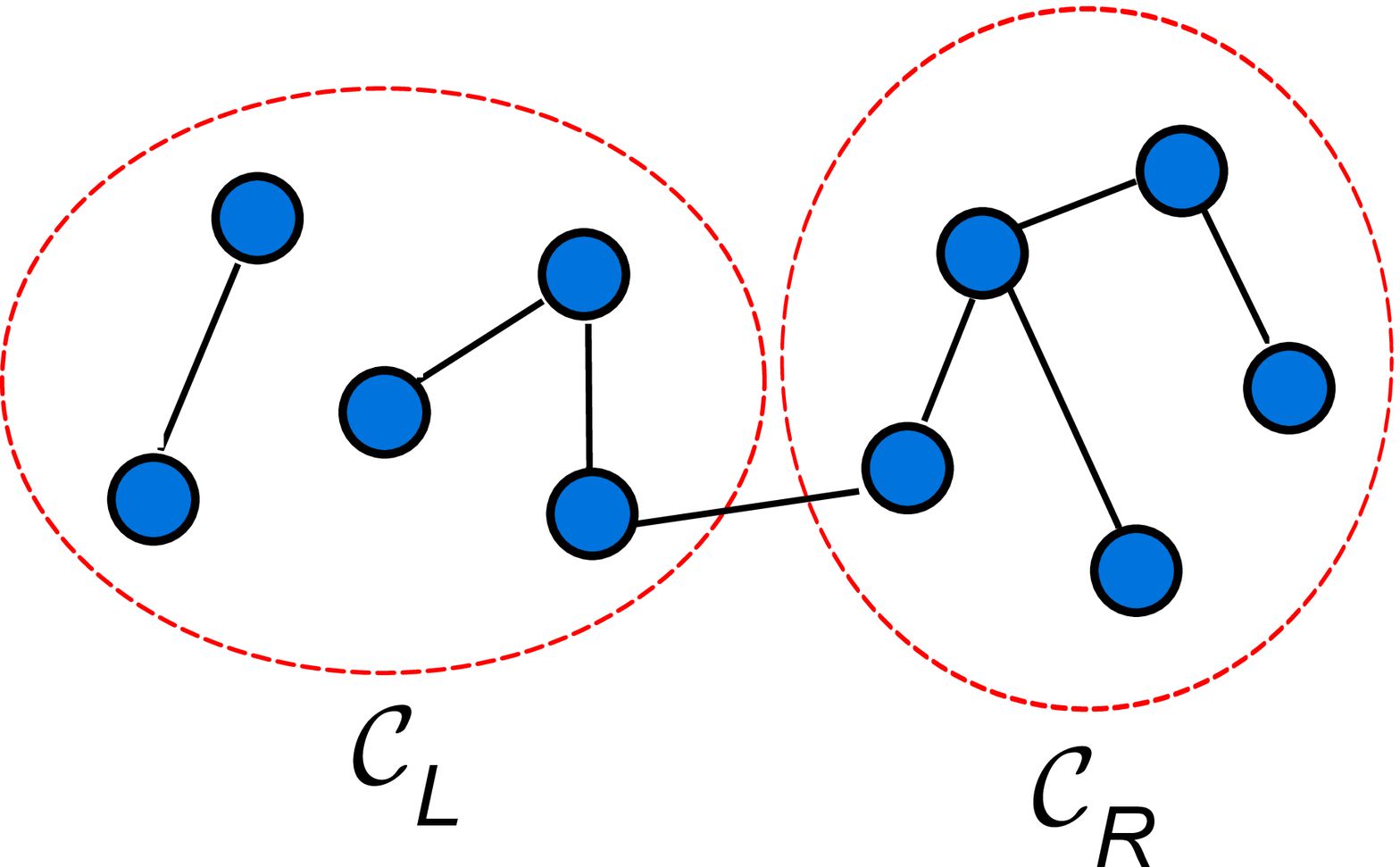,width=4.5cm}}
  \centerline{\textbf{(A)}}
\end{minipage}
\begin{minipage}[h]{0.32\linewidth}
  \centerline{\epsfig{figure=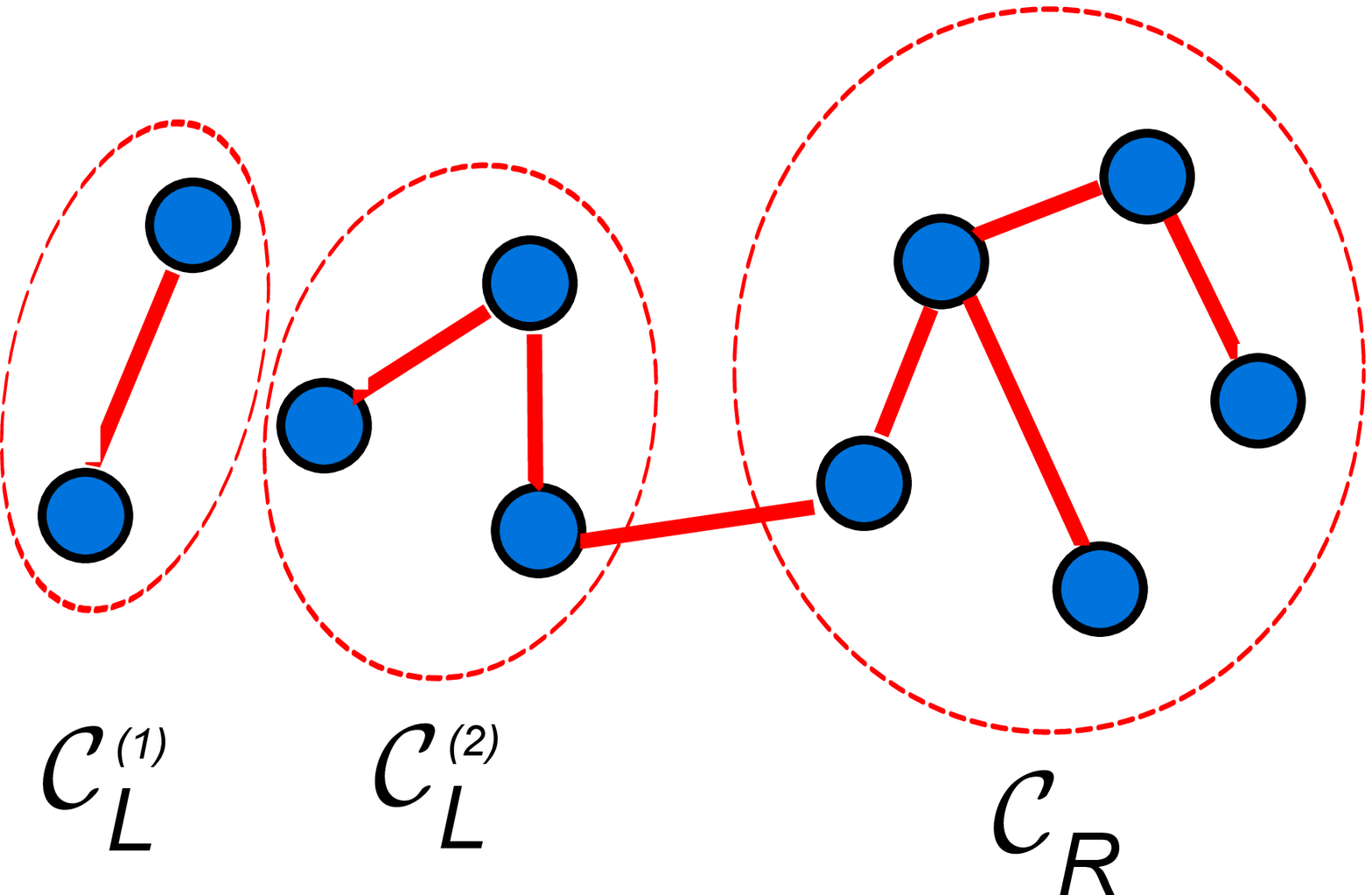,width=4.5cm}}
  \centerline{\textbf{(B)}}
\end{minipage}
\begin{minipage}[h]{0.32\linewidth}
  \centerline{\epsfig{figure=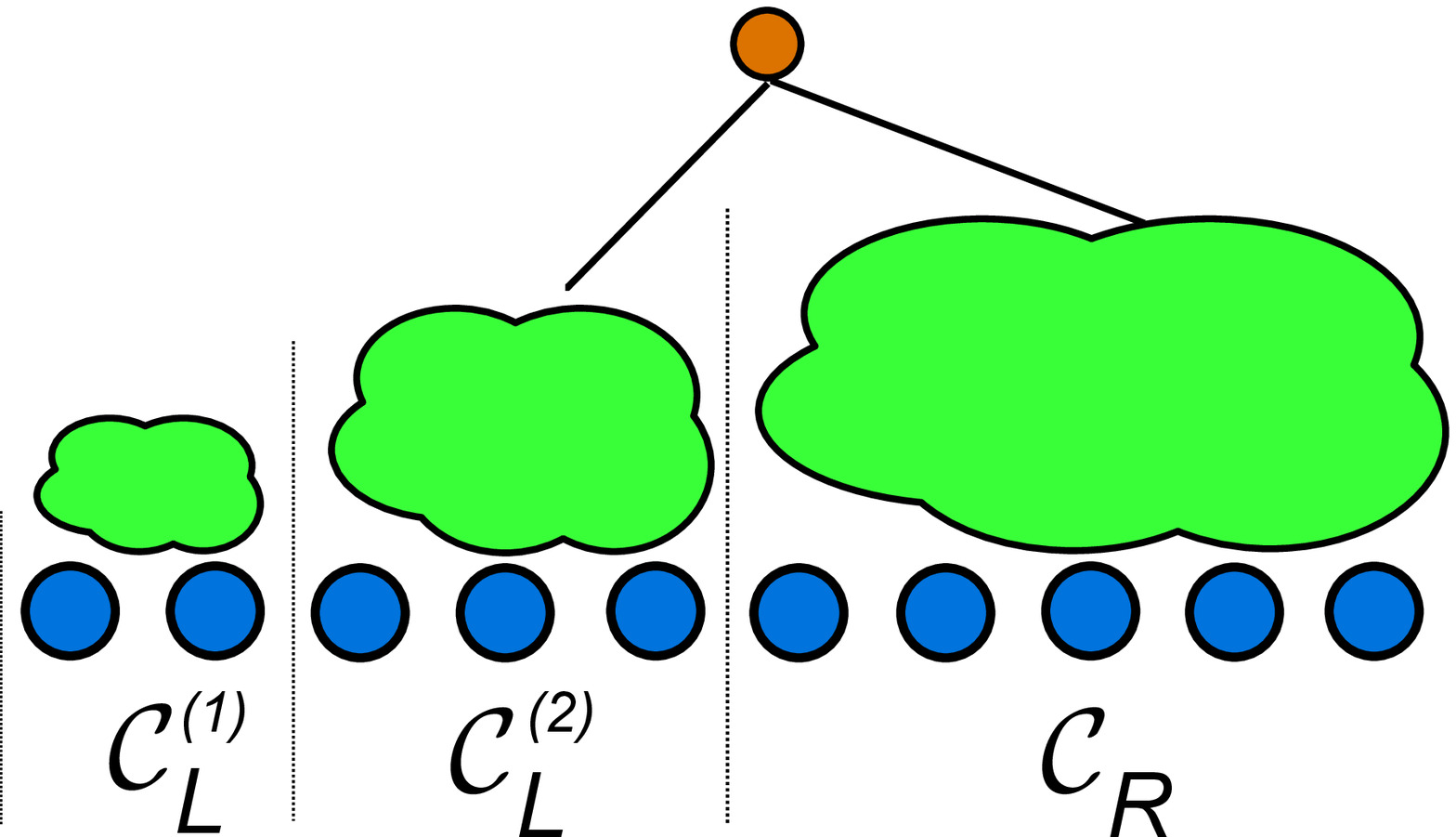,width=4.5cm}}
  \centerline{\textbf{(C)}}
\end{minipage}
\caption{\label{fig:disconnectedTree} (A) - Sparse sampling graph on two clusters ${\cal C}_L,{\cal C}_R$, where an edge between nodes indicates that we have observed the pairwise similarity between those items, (B) - Example disconnected path found through the sampling graph using agglomerative clustering, (C) - Resulting incorrect hierarchical clustering.}
\end{figure*}
%\begin{defNew}
%For a specified uniform probability $p$, we define the \textbf{uniform similarity observation matrix}, $\Omega\left(p\right)$, such that for each pair of items, $i,j$, we state $P\left(\Omega_{i,j}=1\right)\sim p$.
%\end{defNew}

\section{Main Results}
\label{sec:method}
When pairwise similarities are observed uniformly at-random, such that each pairwise similarity is observed with probability $p$, the resulting sampling graph can be considered a bernoulli random graph (where each edge exists with probability $p$).  Using Proposition~\ref{prop:path} and prior work on random graph theory \cite{gilbert}, we can state the following theorem.

\begin{theorem}
\sloppypar
\label{thm:global} Consider the quadruple $({\bf X},\mathcal{T},{\bf
S},\Omega\left(p\right))$, where the
Tight Clustering (TC) condition is satisfied, and $\mathcal{T}$ is
a complete (possible unbalanced) binary tree that is unknown. Then,
the agglomerative clustering algorithm recovers all
clusters of size $\geq n \geq 4$ of $\mathcal{T}$ with probability
$\geq\left(1-\alpha\right)$ for $\alpha>0$ given sampling $\Omega\left(p\right)$ satisifies,
\begin{eqnarray}
\label{eqn:completeBound}
p\geq \max\left\{1-\left(\frac{\alpha n}{78N}\right)^{2/n}, 1-\left(2^{\frac{1}{{n-1}}} - 1 \right)^{2/n}, 1-\left(\frac{\alpha}{2\left(N-n\right)}\right)^{1/n}\right\}
\end{eqnarray}
\end{theorem}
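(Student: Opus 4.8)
The plan is to pass from the correctness of Algorithm~\ref{alg:incompleteAgg} to a purely graph-theoretic connectivity statement about the sampling graph, and then to bound the relevant connectivity-failure probabilities for a Gilbert random graph by a union bound. First I would invoke Proposition~\ref{prop:path}: under the TC condition the algorithm correctly returns a cluster $\clust\in\cT$ exactly when the induced sampling subgraph ${\cal G}_{\clust}$ is connected, since the laminar structure of $\cT$ together with the TC inequalities forces every observed intracluster similarity to be merged before any intercluster one. Applying this at every node whose cluster has size $\ge n$ shows that the event ``all clusters of size $\ge n$ are recovered'' is implied by ``${\cal G}_{\clust}$ is connected for every $\clust\in\cT$ with $|\clust|\ge n$''. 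Because each similarity is observed independently with probability $p$, each ${\cal G}_{\clust}$ is an independent-edge graph on $|\clust|$ vertices with edge probability $p$, so it suffices to show that under~(\ref{eqn:completeBound}) the probability that some cluster of size $\ge n$ is disconnected is at most $\alpha$.

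Next I would control a single cluster. For a cluster of size $m$, a union bound over vertex cuts gives $P({\cal G}_{\clust}\text{ disconnected})\le\sum_{k=1}^{\lfloor m/2\rfloor}\binom{m}{k}(1-p)^{k(m-k)}$. I would separate the isolated-vertex term $k=1$, contributing $m(1-p)^{m-1}$, from the remaining ``balanced'' cuts; using $k(m-k)\ge km/2$ for $k\le m/2$, the tail is dominated by a geometric series in $(1-p)^{m/2}$. The role of the middle condition in~(\ref{eqn:completeBound}), namely $(1-p)^{n/2}\le 2^{\frac{1}{n-1}}-1$, is precisely to guarantee $(1+(1-p)^{m/2})^{m-1}\le 2$ for all $m\ge n$, so that this series stays within a constant factor of its leading term; this converts the cut sum into a clean bound of the form $(\text{const})\cdot(1-p)^{m/2}$.

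Finally I would union-bound over all clusters. The key combinatorial observation is that two clusters of the same size cannot be nested, so, being members of a laminar family, they are disjoint; hence there are at most $N/m$ clusters of size exactly $m$ and at most $N-n+1$ clusters of size $\ge n$ in total. Summing the per-cluster bound over $m\ge n$ against the weight $N/m$ collapses the $m$-dependence and leaves a geometric series equal to a constant multiple of $(1-p)^{n/2}$; requiring this to be at most $\alpha/2$ yields the first term of the maximum, with the absolute constant $78$ absorbing the geometric factor and the cut-bound constant. Requiring the leading isolated-vertex contribution to be at most $\alpha/2$ yields the third term (exponent $1/n$, count $N-n$). Taking the maximum of the three conditions makes each failure mode at most its share of $\alpha$.

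I expect the main obstacle to be uniformity over an arbitrary, possibly unbalanced, tree: the cut sum $\sum_k\binom{m}{k}(1-p)^{k(m-k)}$ must be tamed simultaneously for every size $m\ge n$ and then summed against the worst-case counts $a_m\le N/m$, all while keeping the final expression in closed form. Getting the geometric tail and the constant right, so that the $(1+(1-p)^{m/2})^{m-1}\le 2$ lever supplied by the second term is available uniformly in $m$, is the delicate step; by comparison the reduction to connectivity and the disjointness counting are routine.
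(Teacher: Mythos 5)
Your reduction to connectivity of every cluster of size $\ge n$, your vertex-cut bound for a single cluster, and your reading of the middle term's role (it forces $\left(1+q^{m/2}\right)^{m-1}\le 2$, where $q=1-p$) all match the paper. The gap is in the final union bound, and it is not a matter of constants: your accounting cannot close under the stated condition~(\ref{eqn:completeBound}). The first term of the maximum only guarantees a per-cluster disconnection probability of order $q^{n/2}\le \alpha n/(78N)$, i.e.\ roughly $\alpha n/N$ per cluster. That calibration is designed for a union bound over the at most $N/n$ \emph{disjoint minimal} clusters of size $\ge n$ (the leaves of the pruned tree), which is exactly what Proposition~\ref{prop:intraCluster} does. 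You instead union bound over \emph{all} clusters of size $\ge n$, of which there can be as many as $N-n+1$ in an unbalanced (caterpillar) tree; the sum is then of order $N\cdot\alpha n/N=\alpha n$, which is not $\le\alpha$ once $n$ exceeds a small constant. Your per-size refinement does not repair this: the geometric series $\sum_{m\ge n}q^{m/2}$ has ratio $\sqrt{1-p}$, so it equals $q^{n/2}/\left(1-\sqrt{1-p}\right)\approx (2/p)\,q^{n/2}$, \emph{not} a constant multiple of $q^{n/2}$ --- and the regime this theorem feeds into (Theorems~\ref{thm:constantThm} and~\ref{thm:logThm}) has $p\to 0$, so this factor is polynomially large in $N$.

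The third term cannot absorb the overflow in the way you propose. You assign it to the isolated-vertex contributions, but summing those over all clusters with your counts gives roughly $N\sum_{m\ge n}q^{m-1}\approx Nq^{n-1}/p$, which the condition $q^{n}\le \alpha/\left(2\left(N-n\right)\right)$ does not control when $p$ is small (for $n=\delta N$ it diverges like $N/\log N$). In the paper the third term plays a different role (Proposition~\ref{prop:interCluster}): \emph{conditioned on} the minimal clusters being connected, each larger cluster becomes connected as soon as certain single crossing edges exist --- one attachment event per dangling item, or per merge of two already-connected parts. Each such event fails with probability at most $q^{n}$, there are at most about $N-n$ of them, and they involve disjoint sets of edge variables, so that union bound closes: $\left(N-n\right)q^{n}\le\alpha/2$. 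This asymmetry --- paying the expensive $q^{n/2}$-type connectivity bound only for the $\le N/n$ disjoint minimal clusters, and only the cheap $q^{n}$-type one-edge bound for the up-to-$N$ remaining clusters --- is the essential idea that your symmetric treatment of all clusters misses; repairing your argument along these lines essentially reproduces the paper's decomposition into Propositions~\ref{prop:intraCluster} and~\ref{prop:interCluster}.
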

\begin{proof}
The proof of this theorem follows from the results of Propositions~\ref{prop:intraCluster} and~\ref{prop:interCluster}.
\end{proof}

The first component of Theorem~\ref{thm:global} requires that the sampling probability is large enough that a path in the sampling graph can be found with high probability for any collection of items of size $\geq n$.
\begin{proposition}
\label{prop:intraCluster} Given a set of $N\geq 4$ items, then
the agglomerative clustering algorithm will recover all $\leq \frac{N}{n}$ clusters of size $\geq n \geq 4$ of $\mathcal{T}$ with probability
$\geq\left(1-\frac{\alpha}{2}\right)$ given the sampling probability satisfies,
\begin{eqnarray}
p \geq \max\left\{1-\left(\frac{\alpha n}{78N}\right)^{2/n}, 1-\left(2^{\frac{1}{n-1}} - 1 \right)^{2/n}\right\}
\end{eqnarray}
\end{proposition}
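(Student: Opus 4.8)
The plan is to convert the clustering statement into one about graph connectivity and then apply a quantitative connectivity estimate for Bernoulli random graphs. By Proposition~\ref{prop:path}, the agglomerative algorithm resolves a cluster exactly when the sampling subgraph it induces is connected, so recovering every cluster of size $\geq n$ amounts to every such cluster inducing a connected sampling subgraph. Because each similarity is observed independently with probability $p$, the subgraph induced on a cluster of size $m$ is precisely a Bernoulli random graph $G(m,p)$ of the kind analyzed in \cite{gilbert}. I would therefore reduce the proposition to controlling $P(G(m,p)\text{ disconnected})$ for all $m\geq n$ and then take a union bound over the relevant clusters. The binding constraints are the smallest clusters, and the minimal clusters of size $\geq n$ are pairwise disjoint (two clusters of size $\geq n$ are either nested or disjoint, and a minimal one cannot contain another), each carrying $\geq n$ leaves; hence there are at most $N/n$ of them, which is the source of the $N/n$ in the statement and of the $N/n$ factor entering the failure budget $\alpha/2$.

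The analytic heart of the argument is a non-asymptotic bound, with an explicit constant, on the probability that $G(m,p)$ is disconnected. Writing $q=1-p$, I would expand over the size $k$ of the smaller side of a disconnecting cut, giving $P(G(m,p)\text{ disconnected})\leq\sum_{k=1}^{\lfloor m/2\rfloor}\binom{m}{k}q^{k(m-k)}$, and then use $m-k\geq m/2$ to replace each $q^{k(m-k)}$ by $(q^{m/2})^{k}$. The sum then collapses to a closed form of the type $(1+q^{m/2})^{m-1}-1$, which is exactly the estimate of \cite{gilbert}. This is where both terms of the maximum originate.

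The second term is the regime condition that keeps this closed form usable: $p\geq1-(2^{1/(n-1)}-1)^{2/n}$ is equivalent to $(1+q^{n/2})^{n-1}\leq2$, which simultaneously certifies that a single cluster of size $\geq n$ is connected with a constant margin and permits linearizing the closed form to a constant multiple of $q^{m/2}$. Since $q^{m/2}$ decreases in $m$, checking the condition at the smallest size $m=n$ secures it for every larger cluster. In scaling terms this is the bare connectivity threshold $p\asymp(\log n)/n$, which is the binding constraint when $n=\Theta(N)$ and which produces the advertised $O(N\log N)$ total observations. The first term is then the union-bound refinement: combining the per-cluster disconnection bound with the count $\leq N/n$ of critical clusters and forcing the total below $\alpha/2$, I solve the resulting inequality for $p$ to obtain $p\geq1-(\alpha n/(78N))^{2/n}$, with $78$ the explicit numerical constant delivered by the connectivity estimate. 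The proposition is then the maximum of these two conditions.

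I expect the main obstacle to be the connectivity estimate itself. Bernoulli-graph connectivity has no clean closed form, so the real work lies in selecting a bound---such as the cut expansion above---that is loose enough to sum to a tidy expression, tight enough to retain an explicit constant like $78$, and valid uniformly over all cluster sizes $m\geq n$. A secondary point needing care is the reduction to connectivity under the zero-filling used by Algorithm~\ref{alg:incompleteAgg}: I must verify that setting unobserved similarities to $0$ never triggers a premature inter-cluster merge. This follows from the TC condition, since every observed intra-cluster similarity still strictly exceeds every inter-cluster similarity, so the algorithm exhausts the observed intra-cluster edges before any cross-cluster edge, making connectivity of the induced subgraph exactly the condition needed.
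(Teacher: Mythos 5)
Your overall architecture does match the paper's: reduce to connectivity of the induced Bernoulli subgraphs via Proposition~\ref{prop:path}, union bound over the $\leq N/n$ disjoint minimal clusters, and read the two terms of the maximum as (i) the union-bound budget and (ii) the regime condition $(1+q^{n/2})^{n-1}\leq 2$ (your equivalence here is exactly right, and it is precisely how the paper's dummy-variable choice $C=\alpha n/(78N)$ plays out). The gap is in the per-cluster disconnection estimate, and it is not cosmetic. Your cut-expansion bound collapses to $\sum_{k=1}^{\lfloor m/2\rfloor}\binom{m}{k}(q^{m/2})^k\leq(1+q^{m/2})^m-1$, and you then claim the regime condition lets you linearize this to a \emph{constant} multiple of $q^{m/2}$. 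It cannot: by Bernoulli's inequality $(1+x)^m-1\geq mx$, so the best linearization available in that regime is of the form $2mq^{m/2}$; the factor $m$ is intrinsic to the all-subsets union bound, being already present in the $k=1$ term $\binom{m}{1}q^{m-1}$ once you relax $q^{m-1}$ to $q^{m/2}$. Carrying $2nq^{n/2}$ through the union bound over $N/n$ clusters forces $2Nq^{n/2}\leq\alpha/2$, i.e. $q^{n/2}\leq\alpha/(4N)$, i.e. $p\geq1-\left(\alpha/(4N)\right)^{2/n}$. That is \emph{not} the stated first term: the proposition's hypothesis only guarantees $q^{n/2}\leq\alpha n/(78N)$, which implies your condition only when $n\leq 19$; for $n\geq20$ your failure bound becomes $\alpha n/39>\alpha/2$, so your argument does not establish the proposition as stated.

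The paper closes exactly this hole with two ingredients your plan omits. First, it uses Gilbert's finer inequality, a union bound over the possible components of a \emph{fixed} vertex which keeps the prefactors $q^{n-1}$ and $q^{n/2}$ separate instead of relaxing every term to $(q^{n/2})^k$ at once. Second, Lemma~\ref{lemma:middle} (the factor $20$) converts $(1+q^{(n-2)/2})^{n-1}$ into $10(1+q^{n/2})^{n-1}$ for all $n\geq4$ and $q\in[0,1]$; this is precisely the step that absorbs the would-be factor of $n$ into an absolute constant, yielding the per-cluster bound $q^{n/2}\left(20(1+q^{n/2})^{n-1}-1\right)\leq 39\,q^{n/2}$ whenever $(1+q^{n/2})^{n-1}\leq2$. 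Having a constant (rather than $O(n)$) multiple of $q^{n/2}$ is what makes the numerator $\alpha n/(78N)$, with its factor of $n$, attainable: $\frac{N}{n}\cdot 39\,q^{n/2}\leq\frac{\alpha}{2}$ iff $q^{n/2}\leq\frac{\alpha n}{78N}$. So your remark that the collapsed cut sum ``is exactly the estimate of \cite{gilbert}'' identifies the precise point of failure: Gilbert's bound is strictly sharper, and that sharpness plus Lemma~\ref{lemma:middle} is where the constant $78$ comes from. A repair within your framework is possible but requires the missing idea: treat the small-$k$ cuts with their true exponents (e.g. $k=1$ contributes $mq^{m-1}=mq^{(m-2)/2}\,q^{m/2}$, and $mq^{(m-2)/2}$ is bounded by a constant in the stated regime) rather than relaxing them to $(q^{m/2})^k$. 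Your secondary points --- disjointness of the minimal clusters, the TC/zero-filling reduction --- are fine, though ``checking the condition at $m=n$ secures it for every larger cluster'' deserves a monotonicity argument, since $(1+q^{m/2})^{m}-1$ is not trivially decreasing in $m$.
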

\begin{proof}
We prove this proposition using prior work on random graph theory in the Appendix (Section~\ref{sec:appendixB}).
\end{proof}

%Therefore, to determine the best choice of $C$, we find the value where these two function meet,
%\begin{eqnarray*}
%1-C^{2/n} &=& 1-\left(\frac{\alpha n}{4N\left(1+C\right)^{n-1}}\right)^{2/n} \\
%%C^{2/n} &=& \left(\frac{\alpha n}{2N\left(1+C\right)^{n-1}}\right)^{2/n} \\
%%C &=& \left(\frac{\alpha n}{2N\left(1+C\right)^{n-1}}\right) \\
%C^{1/(1-n)} &=& \left(\frac{\alpha n}{4N}\right)^{1/(1-n)}\left(1+C\right) \\
%\end{eqnarray*}
%Assuming $C\ll 1$, then $C+1\approx 1$.  Therefore, we use,

While Proposition~\ref{prop:intraCluster} ensures that there are enough pairwise similarities to determine each leaf cluster (down to size $n$), to resolve the entire tree structure down to clusters of size $\geq n$ we additionally require enough similarities to determine the connectivity between these clusters.
\begin{proposition}
\label{prop:interCluster}
Consider the quadruple $({\bf X},\mathcal{T},{\bf
S},\Omega\left(p\right))$, where the
Tight Clustering (TC) condition is satisfied, and $\mathcal{T}$ is
a complete (possible unbalanced) binary tree that is unknown. Then,
given a set of clusters of size $\geq n$, the clustering structure of $\mathcal{T}$ pruned to cluster size $n$ will be resolved with probability
$\geq\left(1-\frac{\alpha}{2}\right)$ given sampling $\Omega\left(p\right)$ satisfies,
\begin{eqnarray}
p \geq 1-\left(\frac{\alpha}{2\left(N-n\right)}\right)^{1/N}
\end{eqnarray}
\end{proposition}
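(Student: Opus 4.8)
The plan is to reduce the global requirement ``resolve the pruned tree structure'' to a simple per-split connectivity condition by invoking Proposition~\ref{prop:path} recursively along $\mathcal{T}$, and then to control the resulting failure probability by a union bound over the merges that build the inter-cluster hierarchy. Concretely, consider any cluster $\clust$ of size $\geq n$ that is not the root, and let $\clust'$ denote its sibling in $\mathcal{T}$, so that $\clust\cup\clust'$ is the parent. I claim that, conditioned on the event furnished by Proposition~\ref{prop:intraCluster} (each cluster of size $\geq n$ is internally connected in the sampling graph), the algorithm reproduces the hierarchy pruned at size $n$ \emph{if and only if} for every such $\clust$ the cut $(\clust,\clust')$ contains at least one observed edge. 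This is just Proposition~\ref{prop:path} applied at the split $\clust\cup\clust'\mapsto(\clust,\clust')$, so the only thing left for the present proposition is to guarantee that these linking edges are present with high probability.

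The conceptual heart of the argument is the ordering step, where the Tight Clustering condition does the work. First I would fix a cluster $\clust$ with $|\clust|\geq n$ and argue it is assembled as a single block before it links to anything outside itself: by Definition~\ref{def:cl} every observed intra-$\clust$ similarity strictly exceeds every observed similarity from $\clust$ to $\X\setminus\clust$, so minimum-linkage merging exhausts the (connected) interior of $\clust$ first. Next I would argue that the first outward merge of this block is necessarily with its sibling $\clust'$: again by the TC condition the similarities from $\clust$ to $\clust'$ dominate all similarities from $\clust$ to items outside $\clust\cup\clust'$, so provided a single edge of the cut $(\clust,\clust')$ is observed, that merge fires before any incorrect cross-boundary merge and reproduces the correct parent. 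Hence the sole way a size-$\geq n$ cluster is misplaced is that its sibling cut is entirely unobserved. Since $|\clust|\geq n$ and $|\clust'|\geq 1$, that cut has at least $|\clust|\,|\clust'|\geq n$ candidate edges, each present independently with probability $p$, so this failure has probability $(1-p)^{|\clust||\clust'|}\leq (1-p)^{n}$.

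It then remains to count and union-bound. I would show by induction on $N$ that a complete binary tree on $N$ leaves has at most $N-n+1$ nodes of size $\geq n$, hence at most $N-n$ once the root is excluded; the induction splits on the subtree sizes $a,b$ of the root and uses $(a-n+1)^{+}+(b-n+1)^{+}+1\leq N-n+1$ for $n\geq 2$. Summing the per-cut bound over these $\leq N-n$ sibling cuts gives an inter-cluster failure probability of at most $(N-n)(1-p)^{n}$, and forcing this below $\alpha/2$ yields $p\geq 1-\bigl(\tfrac{\alpha}{2(N-n)}\bigr)^{1/n}$; since $n\leq N$, the stated condition with exponent $1/N$ is the more conservative requirement and is therefore also sufficient. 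I expect the probability bookkeeping and the counting lemma to be routine, and the main obstacle to be making the TC ordering reduction airtight, so that one observed edge per sibling cut is genuinely necessary and sufficient to reproduce the pruned structure, particularly in the delicate case where a sibling has fewer than $n$ items and so is not itself required to be internally resolved.
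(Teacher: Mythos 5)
Your core derivation is the same as the paper's: demand at least one observed similarity across each inter-cluster link, bound each such failure by $(1-p)^n$ (each link has at least $n$ candidate edges), and union bound over at most $N-n$ links to get $(N-n)(1-p)^n\leq\frac{\alpha}{2}$, i.e., $p\geq 1-\left(\frac{\alpha}{2(N-n)}\right)^{1/n}$. Your write-up is in fact more careful than the paper's proof, which asserts the per-link bound and the count of $N-n$ comparisons in two sentences, with no TC ordering argument and no counting induction. (The delicate point you flag --- siblings of size $<n$, whose stray items must still attach to the parent before the parent can appear as a cluster --- is glossed over by the paper as well; it can be patched by charging each stray item an attachment-failure probability of at most $(1-p)^n$, which preserves the same union bound.)

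The one genuinely false step is your final reconciliation with the stated exponent. You claim that because $n\leq N$ the stated condition $p\geq 1-\left(\frac{\alpha}{2(N-n)}\right)^{1/N}$ is ``more conservative'' than the condition you derived, and is therefore also implied by your argument. The inequality runs the other way: writing $x=\frac{\alpha}{2(N-n)}\in(0,1)$, $N\geq n$ gives $x^{1/N}\geq x^{1/n}$, hence $1-x^{1/N}\leq 1-x^{1/n}$. The stated threshold is thus the \emph{weaker} requirement on $p$; a sampling rate meeting it need not meet the threshold your union bound needs, so the $1/N$ statement is strictly stronger than what you proved, and no monotonicity remark can bridge that direction. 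The resolution is that the exponent $1/N$ in the proposition is evidently a typo: the paper's own proof derives exactly your $1/n$ bound and stops there, and the corresponding term in Theorem~\ref{thm:global}, as well as its use in the third-term derivation for Theorem~\ref{thm:constantThm}, is written with exponent $1/n$. So your derivation establishes the statement that is intended and actually used downstream; you should delete the final sentence claiming the $1/N$ form follows, since as written that claim is false.
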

\begin{proof}
Consider the clustering structure of $\mathcal{T}$ and a single cluster of size $n$.  At most, there will be $N-n$ other clusters in $\mathcal{T}$ that must be compared against to construct the clustering hierarchy.  Given sampling rate $p$, then at least one item (out of $\geq n$) must observe at least one pairwise similarity with at least one item in the other cluster.  Therefore, to ensure that every cluster satisfies this with probability $\geq\left(1-\frac{\alpha}{2}\right)$, using the union bound we require the sampling rate to satisfy,
\begin{eqnarray*}
\left(1-p\right)^n &\leq& \frac{\alpha}{2\left(N-n\right)} \\
%1-p &\leq& \left(\frac{\alpha}{N-n}\right)^{1/n} \\
%1-p &\leq& \left(\frac{\alpha}{N-n}\right)^{1/n} \\
p&\geq& 1-\left(\frac{\alpha}{2\left(N-n\right)}\right)^{1/n}
\end{eqnarray*}
\end{proof}

Combining the results of Propositions~\ref{prop:intraCluster} and~\ref{prop:interCluster}, we find the sampling probability rate necessary to ensure with high probability ({\em i.e.,} $\geq 1 - \alpha$) that all the clusters of size $\geq n$ will be resolved, and the clustering hierarchy between these clusters can be reconstructed.  This is shown in Equation~\ref{eqn:completeBound}.

\subsection{Sampling Rate Required for Given Cluster Sizes}

Using the results from Theorem~\ref{thm:global}, we can state the expected number of pairwise similarity measurements needed to observe clusters down to a specified level.  For clusters of size $\sim O\left(\delta N^{\beta}\right)$, where $\beta\in\left(0,1\right]$ and $\delta\in\left(0,1\right]$, we find the following,

\begin{theorem}
\sloppypar
\label{thm:constantThm} Consider the quadruple $({\bf X},\mathcal{T},{\bf
S},\Omega\left(p\right))$, where the
Tight Clustering (TC) condition is satisfied, and $\mathcal{T}$ is
a complete (possible unbalanced) binary tree that is unknown. Then,
agglomerative clustering recovers all clusters of size $\geq \delta N^{\beta} \geq 4$ (where $\beta\in\left(0,1\right]$ and $\delta\in\left(0,1\right]$) of $\mathcal{T}$ with probability
$\geq\left(1-\alpha\right)$ given that the total number of items $N\geq\max\{4,\left(\frac{\alpha}{2}\right)^{1/\left(1-2\kappa\right)},\left(\frac{\alpha \delta}{78}\right)^{\frac{1}{\left(1-\beta-\kappa\right)}}\}$ and the pairwise similarities are sampled at-random with probability,
\begin{eqnarray}
p \geq \frac{2 \kappa}{\delta}N^{-\beta}\log{N}
\end{eqnarray}
For a constant oversampling factor $\kappa\geq 3$.
\end{theorem}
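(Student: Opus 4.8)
The plan is to treat Theorem~\ref{thm:constantThm} as a specialization of Theorem~\ref{thm:global}: I set the target cluster size to $n = \delta N^{\beta}$ (which satisfies $n \geq 4$ by the hypothesis $\delta N^{\beta}\geq 4$), so that the prescribed rate can be rewritten as $p \geq \frac{2\kappa}{\delta}N^{-\beta}\log N = \frac{2\kappa \log N}{n}$. By Theorem~\ref{thm:global} it then suffices to show that this single rate simultaneously dominates each of the three terms appearing in the maximum of \eqref{eqn:completeBound}, and I will match each of the three lower bounds imposed on $N$ to one of those three terms.

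The main tool is the elementary linearization $1 - x^{a} \leq a\log(1/x)$ for $x\in(0,1]$ and $a>0$, which follows from $e^{u}\geq 1+u$ applied to $u = a\log x$ (here $\log$ denotes the natural logarithm). For the first term, $1-\left(\frac{\alpha n}{78N}\right)^{2/n}\leq \frac{2}{n}\log\frac{78N}{\alpha n}=\frac{2}{n}\log\frac{78N^{1-\beta}}{\alpha\delta}$, so the required inequality $\frac{2\kappa\log N}{n}\geq \frac{2}{n}\log\frac{78N^{1-\beta}}{\alpha\delta}$ reduces to $(\kappa+\beta-1)\log N\geq \log\frac{78}{\alpha\delta}$; taking logarithms in the hypothesis $N\geq\left(\frac{\alpha\delta}{78}\right)^{1/(1-\beta-\kappa)}$ and noting that $1-\beta-\kappa<0$ flips the inequality into exactly this. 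Likewise the third term satisfies $1-\left(\frac{\alpha}{2(N-n)}\right)^{1/n}\leq \frac{1}{n}\log\frac{2(N-n)}{\alpha}\leq\frac{1}{n}\log\frac{2N}{\alpha}$, so dominating it reduces to $(2\kappa-1)\log N\geq\log\frac{2}{\alpha}$, which is precisely what $N\geq\left(\frac{\alpha}{2}\right)^{1/(1-2\kappa)}$ delivers.

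The step I expect to require the most care is the middle term $1-\left(2^{1/(n-1)}-1\right)^{2/n}$, the random-graph connectivity threshold, which does not linearize directly. Here I would first lower bound $2^{1/(n-1)}-1$ using the convexity estimate $2^{x}-1\geq x\log 2$, valid for $x\geq 0$ since its derivative $\log 2\,(2^{x}-1)$ is nonnegative and it vanishes at $x=0$; this gives $2^{1/(n-1)}-1\geq \frac{\log 2}{n-1}$. Combining with the same linearization yields $1-\left(2^{1/(n-1)}-1\right)^{2/n}\leq \frac{2}{n}\log\frac{n-1}{\log 2}$, so that dominating this term amounts to $\kappa\log N\geq \log\frac{n-1}{\log 2}$. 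Since $\delta\leq 1$ gives $\log n\leq \beta\log N$, the right-hand side is at most $\beta\log N$ plus a constant, and because $\kappa\geq 3$ while $\beta\leq 1$ the gap $(\kappa-\beta)\log N$ already exceeds that constant for all $N\geq 4$.

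Putting the three verifications together shows that the stated $p$ exceeds the maximum in \eqref{eqn:completeBound} under the given constraints on $N$, so Theorem~\ref{thm:global} applies and all clusters of size $\geq\delta N^{\beta}$ are recovered with probability $\geq 1-\alpha$. As a closing remark, the expected number of observed similarities is $\binom{N}{2}p = O\!\left(N^{2-\beta}\log N\right)$, recovering the advertised sampling rate.
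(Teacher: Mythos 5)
Your proof is correct, and its skeleton is the same as the paper's: specialize Theorem~\ref{thm:global} at $n=\delta N^{\beta}$, and show that the prescribed rate $\frac{2\kappa}{\delta}N^{-\beta}\log N = \frac{2\kappa\log N}{n}$ dominates each of the three terms in Equation~\ref{eqn:completeBound}, matching each of the three lower bounds on $N$ to one term. Your treatment of the first and third terms is substantively identical to the paper's: your linearization $1-x^{a}\leq a\log(1/x)$ is the inequality $e^{u}\geq 1+u$, which the paper applies in the equivalent form $\log(1-p)\leq -p$ after taking logarithms of the required inequality. The genuine difference is the middle (random-graph connectivity) term. The paper lower bounds $\log\left(2^{1/(n-1)}-1\right)$ by the secant line of $\log$ through the endpoints of the interval $\left(1/N,1\right)$, reducing the requirement to $\kappa\geq \frac{2N}{N-1}$, which is where its constraints $\kappa\geq 3$ and $N\geq 4$ enter. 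You instead use the convexity bound $2^{t}-1\geq t\log 2$ to get $2^{1/(n-1)}-1\geq \frac{\log 2}{n-1}$ and then apply the same linearization, reducing the requirement to $(\kappa-\beta)\log N\geq \log(1/\log 2)$, which holds comfortably under $\kappa\geq 3$ and $N\geq 4$. Your route is actually more robust: the paper's secant argument needs the containment $2^{1/(n-1)}-1\in\left(1/N,1\right)$, which can fail when $n=\delta N^{\beta}$ is close to $N$ (for instance $\delta=\beta=1$ and $N=10$ gives $2^{1/9}-1\approx 0.08<1/10$), whereas your bound needs no such condition and still lands on the same constants. The one cosmetic caveat is your closing remark: $\binom{N}{2}p$ gives the advertised $O\left(N^{2-\beta}\log N\right)$ expected sample count, but that is a corollary the paper states separately, not part of this theorem's claim.
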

\begin{proof}
The proof of this theorem can be found in the Appendix (Section~\ref{sec:appendixB}).
\end{proof}

To see the improvements of these bounds, consider the following simple example.  We want to reconstruct the hierarchical clustering containing $N=1000$ items, specifically recovering all clusters of size $\geq 75$ (using $\delta = 0.5$, $\beta = 0.66$, and $\delta N^\beta = 75$) with probability $\geq \left(1-\alpha\right) = 0.95$.  Given oversampling factor $\kappa = 3$ and using the results of Theorem~\ref{thm:constantThm}, we find that to resolve this resolution of clustering only requires a similarity sampling rate of $p\geq 0.5526$, on average observing $276,020$ pairwise similarities.  This is a significant savings over standard techniques that require the entire set of $499,500$ similarities.

And finally we consider the ability to find large clusters of size $O\left(N\right)$.

\begin{theorem}
\sloppypar
\label{thm:logThm} Consider the quadruple $({\bf X},\mathcal{T},{\bf
S},\Omega\left(p\right))$, where the
Tight Clustering (TC) condition is satisfied, and $\mathcal{T}$ is
a complete (possible unbalanced) binary tree that is unknown. Then,
agglomerative clustering recovers all clusters of size $\geq \delta N  \geq 4$ (where $\delta\in\left(0,1\right]$) of $\mathcal{T}$ with probability
$\geq\left(1-\alpha\right)$ given that the number of items $N\geq\max\{4,\left(\frac{\alpha}{2}\right)^{1/\left(1-2\kappa\right)},\left(\frac{\alpha \delta}{78}\right)^{\frac{1}{-\kappa}}\}$ and the pairwise similarities are sampled at-random with probability,
\begin{eqnarray}
p \geq \frac{2 \kappa}{\delta}\frac{\log{N}}{N}
\end{eqnarray}
For a constant oversampling factor $\kappa\geq 3$.
\end{theorem}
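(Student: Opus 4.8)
The plan is to recognize that Theorem~\ref{thm:logThm} is precisely the endpoint $\beta = 1$ of Theorem~\ref{thm:constantThm}, and that both are obtained by specializing the general sampling bound of Theorem~\ref{thm:global} to a target cluster size $n = \delta N$. So the whole argument reduces to verifying that the logarithmic rate $p \geq \frac{2\kappa}{\delta}\frac{\log N}{N}$ dominates each of the three terms inside the maximum of Equation~\ref{eqn:completeBound} once we substitute $n = \delta N$, under the three stated lower bounds on $N$. I would organize the proof as three independent verifications, one per term, each producing exactly one of the $N$-conditions in the hypothesis.

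First I would handle the intracluster (giant-component) term $1 - (\alpha n / 78 N)^{2/n}$. Substituting $n = \delta N$ collapses $\alpha n / 78 N$ to the $N$-free constant $\alpha\delta/78$, so the term becomes $1 - (\alpha\delta/78)^{2/(\delta N)}$. Writing $(\alpha\delta/78)^{2/(\delta N)} = e^{-\frac{2}{\delta N}\log(78/(\alpha\delta))}$ and using $e^{-x} \geq 1 - x$, it suffices to show $p \geq \frac{2}{\delta N}\log(78/(\alpha\delta))$; comparing against the proposed rate this is exactly $\kappa \log N \geq \log(78/(\alpha\delta))$, i.e. $N \geq (78/(\alpha\delta))^{1/\kappa}$, which is the stated condition $N \geq (\alpha\delta/78)^{1/(-\kappa)}$.

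Next I would dispatch the intercluster term $1 - (\alpha/(2(N-n)))^{1/n}$ of Proposition~\ref{prop:interCluster}. Since $N - n = (1-\delta)N \leq N$, this term is at most $1 - (\alpha/2N)^{1/n} \leq \frac{1}{\delta N}\log(2N/\alpha)$ by the same $e^{-x}\geq 1-x$ linearization. Requiring the proposed $p$ to exceed this reduces to $(2\kappa - 1)\log N \geq \log(2/\alpha)$, i.e. $N \geq (2/\alpha)^{1/(2\kappa-1)} = (\alpha/2)^{1/(1-2\kappa)}$, which is the second stated $N$-condition.

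Finally, the connectivity-threshold term $1 - (2^{1/(n-1)} - 1)^{2/n}$ is the only one carrying no free parameter, and is where the oversampling factor $\kappa \geq 3$ does its work. Using $2^{1/(n-1)} - 1 \geq (\log 2)/(n-1)$ and one more application of $e^{-x} \geq 1-x$ bounds this term above by roughly $\frac{2}{\delta N}(\log N + O(1))$, and the factor $\kappa$ in the proposed rate then absorbs the additive constant for every $N \geq 4$. The main obstacle is bookkeeping rather than conceptual: each term must be linearized in the right direction (upper-bounding a threshold means lower-bounding the inner base before raising to the $2/n$ or $1/n$ power), and I must confirm that the three resulting inequalities produce exactly the three $N$-conditions in the hypothesis, with the $\kappa \geq 3$ slack covering the residual constants in the connectivity term.
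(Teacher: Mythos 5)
Your proposal is correct and takes essentially the same route as the paper: the paper proves this theorem by setting $\beta = 1$ in Theorem~\ref{thm:constantThm}, whose proof is precisely your three-term verification that the rate $\frac{2\kappa}{\delta}\frac{\log N}{N}$ dominates each term of Equation~\ref{eqn:completeBound} with $n = \delta N$, producing the three stated conditions on $N$. The only differences are cosmetic choices in how the logarithm is linearized (e.g., your bound $2^{1/(n-1)} - 1 \geq \log 2/(n-1)$ versus the paper's chord bound on $\log$ in the connectivity term), which do not change the structure of the argument.
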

\begin{proof}
The proof follows from Theorem~\ref{thm:constantThm}.
\end{proof}
Therefore, we find that to resolve clusters down to size $\geq \delta N \geq 4$ (for $\delta \in \left(0,1\right]$) requires only $\frac{2 \kappa}{\delta}N\log{N}$ randomly chosen pairwise similarities in expectation.  For example, to resolve the hierarchical clustering down to clusters of size $\geq 100$ from a set of $1000$ items requires (on average) less than $42\%$ of the complete set of pairwise similarities to be observed at-random (given $\alpha=0.05$).

%\section{Experiments}
%\label{sec:exp}
%
%{\textbf{FIXME - Add Experiments}}
%Balanced binary tree

%Orbis Internet graph

%Real world (density-based distance on gene data)?

\section{Conclusions}
\label{sec:summary}

Hierarchical clustering from pairwise similarities is found in disparate problems ranging from Internet topology discovery to bioinformatics.  Often these applications are limited by a significant cost required to obtain each pairwise similarity.  Prior work on efficient clustering required an adaptive regime where targeted measurements were acquired one-at-a-time.  In this paper, we consider the more general problem of clustering from a set of incomplete similarities taken at-random.  We present provable bounds demonstrating that resolving large clusters can be determined with only $O\left(N\log{N}\right)$ similarities on average.  Future work in this area includes developing efficient clustering techniques that are also robust to outlier measurements and considering alternative sampling methodologies.

\section{Appendix}

\subsection{Lemma~\ref{lemma:middle}}
\label{sec:appendixA}
\begin{lemma}
\label{lemma:middle}
Given $n\geq 4$ and $0\leq q \leq 1$,
\begin{eqnarray*}
2q^{n/2}\left(1+q^{\frac{n-2}{2}}\right)^{n-1} \leq 20q^{n/2}\left(1+q^{n/2}\right)^{n-1}
\end{eqnarray*}
\end{lemma}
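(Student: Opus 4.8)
The plan is to first dispose of the degenerate case and then reduce the inequality to a statement about a single–variable ratio. If $q=0$ both sides vanish, so I assume $q>0$ and divide through by the common positive factor $2q^{n/2}$. The claim becomes
\begin{eqnarray*}
\left(1+q^{\frac{n-2}{2}}\right)^{n-1} \leq 10\left(1+q^{n/2}\right)^{n-1},
\end{eqnarray*}
equivalently $R(q):=\left(\frac{1+q^{(n-2)/2}}{1+q^{n/2}}\right)^{n-1}\leq 10$. Since $0\leq q\leq 1$ forces $q^{(n-2)/2}\geq q^{n/2}$, the base of $R(q)$ is at least $1$, and $R$ equals $1$ at both endpoints $q=0$ and $q=1$; so the inequality only has content for intermediate $q$, and the generous constant $10$ (equivalently $20$ as stated) suggests there will be comfortable slack.

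Next I would linearize the base so that the $(n-1)$–st power can be controlled by an exponential. Writing
\begin{eqnarray*}
\frac{1+q^{(n-2)/2}}{1+q^{n/2}} = 1 + \frac{q^{(n-2)/2}(1-q)}{1+q^{n/2}} \leq 1 + q^{(n-2)/2}(1-q),
\end{eqnarray*}
where the last step uses $1+q^{n/2}\geq 1$, and then applying $(1+x)^{n-1}\leq e^{(n-1)x}$ for $x\geq 0$, it suffices to upper bound the exponent $(n-1)\,g(q)$, where $g(q):=q^{(n-2)/2}(1-q)$.

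The core of the argument is then a bound on $\max_{q\in[0,1]}(n-1)g(q)$ that is \emph{uniform in} $n$. A routine derivative computation yields the interior maximizer $q^{\ast}=\frac{n-2}{n}$, at which $g(q^{\ast}) = \frac{2}{n}\left(\frac{n-2}{n}\right)^{(n-2)/2}$. The key estimate is $\left(1-\frac{2}{n}\right)^{(n-2)/2}\leq e^{-(n-2)/n}=e^{-1+2/n}$ via $1-x\leq e^{-x}$, which gives
\begin{eqnarray*}
(n-1)\,g(q^{\ast}) \leq \frac{2(n-1)}{n}\,e^{-1+2/n} < 2\,e^{-1/2} < 1.22,
\end{eqnarray*}
the last two steps using $\frac{2(n-1)}{n}<2$ and $e^{2/n}\leq e^{1/2}$ for $n\geq 4$. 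Hence $R(q)\leq e^{(n-1)g(q^{\ast})}<e^{1.22}<10$, establishing the reduced claim and therefore the lemma.

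I expect the only genuine obstacle to be this last step: showing the maximum of $(n-1)g(q)$ stays below the target for every $n\geq4$ simultaneously, not merely asymptotically. The resolution is the inequality $1-x\leq e^{-x}$, which converts the awkward factor $\left(\frac{n-2}{n}\right)^{(n-2)/2}$ into the clean bound $e^{-1+2/n}$ and reveals that the supremum over $n$ is governed by the limit $2/e$, well below $\ln 10\approx 2.30$. Everything else is elementary single–variable calculus, and the loose constant leaves ample margin, so no delicate estimates are required.
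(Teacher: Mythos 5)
Your proof is correct, and it diverges from the paper's argument at precisely the step that matters. Both proofs begin the same way: cancel the common factor $2q^{n/2}$, reduce the $(n-1)$-st power to an exponential statement (your $(1+x)^{n-1}\leq e^{(n-1)x}$ is the same device as the paper's $\log(1+x)\leq x$), and arrive at the task of bounding $(n-1)\,q^{(n-2)/2}(1-q)$ by $\log 10$ (the paper phrases this as $q^{(n-2)/2}-q^{n/2}\leq\frac{1}{n-1}\log\lambda$ and later sets $\lambda=10$). The difference lies in how that quantity is controlled. The paper splits at $q=1/2$: for $q\leq 1/2$ it bounds $q^{(n-2)/2}(1-q)\leq q^{(n-2)/2}\leq 2^{-(n-2)/2}$ and checks the resulting inequality in $n$ numerically; for $q>1/2$ it asserts that the left-hand side $q^{n/2}\left(\frac{1}{q}-1\right)$ is negative, which is false --- that expression is strictly positive for $1/2<q<1$, and in fact the maximizer $q^{\ast}=1-2/n$ of $g$ lies in exactly that region for every $n\geq 4$, so the paper's case analysis dismisses the hardest values of $q$ with an incorrect sign claim. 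Your argument --- locating $q^{\ast}=(n-2)/n$ by calculus and converting $\left(1-2/n\right)^{(n-2)/2}$ into $e^{-1+2/n}$ via $1-x\leq e^{-x}$ --- covers all $q\in[0,1]$ uniformly, needs no case split or numerical verification, and yields the stronger conclusion that the ratio is at most $e^{2e^{-1/2}}<3.4$, showing the paper's constant $20$ is far from tight. In short: same initial reduction, but your key estimate is correct where the paper's is flawed, and it is quantitatively sharper.
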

\begin{proof}
Consider some constant value $\lambda\geq 1$, then we want to show that,
\begin{eqnarray*}
2q^{n/2}\left(1+q^{\frac{n-2}{2}}\right)^{n-1} \leq 2\lambda q^{n/2}\left(1+q^{n/2}\right)^{n-1}
\end{eqnarray*}

Rearranging both sides and given that $\lambda^{\frac{1}{n-1}} \in \left(1,\lambda\right)$ for $n>1$.,
\begin{eqnarray*}
%\frac{\left(1+q^{\frac{n-2}{2}}\right)^{n-1}}{\left(1+q^{n/2}\right)^{n-1}} &\leq& \lambda \\
%{\left(1+q^{\frac{n-2}{2}}\right)} &\leq& \lambda^{\frac{1}{n-1}} {\left(1+q^{n/2}\right)}\\
{\left(1+q^{\frac{n-2}{2}}\right)} &\leq& \lambda^{\frac{1}{n-1}} +q^{n/2}\\
\end{eqnarray*}

Further rearranging and bounding the log function,
\begin{eqnarray*}
%1+q^{\frac{n-2}{2}}-q^{n/2} &\leq& \lambda^{\frac{1}{n-1}}\\
\log\left(1+q^{\frac{n-2}{2}}-q^{n/2}\right) &\leq& \log\left(\lambda^{\frac{1}{n-1}}\right)\\
q^{\frac{n-2}{2}}-q^{n/2} &\leq& \frac{1}{n-1}\log\left(\lambda\right)\\
q^{n/2}\left(\frac{1}{q}-1\right) &\leq& \frac{1}{n-1}\log\left(\lambda\right)\\
\end{eqnarray*}
If $q>\frac{1}{2}$, then this inequality is satisfied as the left-hand term is always negative and the right-hand term is always positive.

Considering $q\leq\frac{1}{2}$,
\begin{eqnarray*}
q^{n/2}\left(\frac{1}{q}-1\right) \leq q^{\frac{n-2}{2}} &\leq& \frac{1}{n-1}\log\left(\lambda\right)
\end{eqnarray*}

Taking the log of both sides,
\begin{eqnarray*}
{\frac{n-2}{2}}\log{q} &\leq& \log\log\left(\lambda\right) - \log\left(n-1\right)
\end{eqnarray*}

If $n\geq 4$, then $\left(\frac{n}{2} - 1\right) \geq \frac{n}{4}$.  Therefore,
\begin{eqnarray*}
%{\frac{n}{4}}\log{q} &\leq& \log\log\left(\lambda\right) - \log\left(n-1\right)\\
{\frac{n}{4}}\log{q} &\leq& \log\log\left(\lambda\right) - \log\left(n\right)\\
\end{eqnarray*}

If $q\leq\frac{1}{2}$, then $\log{q}\leq\log{\frac{1}{2}}$, therefore,
\begin{eqnarray*}
{\frac{n}{4}}\log{\frac{1}{2}} - \log\log\left(\lambda\right) &\leq&  - \log\left(n\right)\\
n + \frac{4}{\log{2}}\log\log\left(\lambda\right) &\geq&  \frac{4}{\log{2}}\log\left(n\right)\\
\end{eqnarray*}

Solving numerically, we find that this is satisfied for all $n\geq 4$ if $\lambda=10$. This proves the result.

%Using the convexity of the log function,
%\begin{eqnarray*}
%\frac{1}{n-1}\log\left(\frac{2}{3}\right) + q^{\frac{n-2}{2}} &\leq& \left(q^{\frac{n}{2}}\right)\log{2} \\
%\frac{1}{n-1}\log\left(\frac{2}{3}\right) &\leq& \left(q^{\frac{n}{2}}\right)\left(\log{2} - \frac{1}{q}\right) \\
%\frac{1}{n}\log\left(\frac{3}{2}\right) &\geq& q^{\frac{n}{2} - 1} \\
%\log{n} + \log\log{\frac{3}{2}} \geq \left(\frac{n}{2}-1\right)\left(q-1\right)
%\end{eqnarray*}
%Given $n\geq 3$ and $q\in\left(0,1\right)$, then the right hand sign is always negative and the left hand sign is always positive.  This proves the result.
\end{proof}

\subsection{Proof of Proposition~\ref{prop:intraCluster}}
\label{sec:appendixB}
We begin by determining the sampling probability, $p$, necessary to ensure that a single set of $n$ items can be clustered.  This is equivalent to a bernoulli random graph (${\cal G}_{n,p}$) of size $n$ and probability $p$ being connected.  From \cite{gilbert}, we bound this probability as,
\begin{eqnarray*}
P\left({\cal G}_{n,p} \mbox{ is connected}\right) \geq 1-q^{n-1}\left(\left(1+q^{\frac{n-2}{2}}\right)^{n-1}-q^{\frac{\left(n-1\right)\left(n-2\right)}{2}}\right)-q^{n/2}\left(\left(1+q^{\frac{n-2}{2}}\right)^{n-1}-1\right)
\end{eqnarray*}
Where $q = 1-p$.
Simplifying in terms of $n$, and using $q^{n-1} \leq q^{n/2}$ (for all $n\geq 2$),
\begin{eqnarray*}
P\left({\cal G}_{n,p} \mbox{ is connected}\right) &\geq& 1-\left(1+q^{\frac{n-2}{2}}\right)^{n-1}\left(q^{n-1}+q^{n/2}\right)+q^{n/2} \\
&\geq& 1-2q^{n/2}\left(1+q^{\frac{n-2}{2}}\right)^{n-1}+q^{n/2}
\end{eqnarray*}

Bounding the middle term using Lemma~\ref{lemma:middle} (found in the appendix, Section~\ref{sec:appendixA}), we then find that for all $n\geq 4$,
\begin{eqnarray*}
P\left({\cal G}_{n,p} \mbox{ is connected}\right)&\geq& 1-20q^{n/2}\left(1+q^{n/2}\right)^{n-1}+q^{n/2}\\
%&\geq& 1-20q^{n/2}\left(1+q^{n/2}\right)^{n-1}+q^{n/2}\\
&\geq& 1-q^{n/2}\left(20\left(1+q^{n/2}\right)^{n-1}-1\right)\\
\end{eqnarray*}
Considering the entire set of $N$ items, there will be at most $\frac{N}{n}$ leaf clusters to resolve (where each cluster has $\geq n$ items).  Therefore, we bound the probability that ${\cal G}_{n,p}$ is disconnected as,
\begin{eqnarray*}
q^{n/2}\left(20\left(1+q^{n/2}\right)^{n-1}-1\right) \leq \frac{\alpha n}{2N}
\end{eqnarray*}
Where, using the union bound, we state that all $\leq \frac{N}{n}$ clusters containing $\geq n$ items will be connected with probability $\geq 1-\frac{\alpha}{2}$.

Bounding $q^{n/2} \leq C$, with $C>0$ being a dummy variable, then
\begin{eqnarray*}
q^{n/2}\left(20\left(1+q^{n/2}\right)^{n-1}-1\right) \leq C\left(20\left(1+q^{n/2}\right)^{n-1}-1\right) \leq \frac{\alpha n}{2N}
\end{eqnarray*}

Therefore, we can solve with respect to the probability of observation, $p$, and dummy variable $C$ that,
\begin{eqnarray}
\label{eqn:boundA}
q^{n/2} &\leq& C \notag \\
p &\geq& 1-C^{2/n}
\end{eqnarray}
And,
\begin{eqnarray*}
C\left(20\left(1+q^{n/2}\right)^{n-1}-1\right) &\leq& \frac{\alpha n}{2N} \\
%q^{n/2} &\leq& \frac{\alpha n}{2N\left(1+C\right)^{n-1}} \\
%\left(1+q^{n/2}\right)^{n-1} &\leq& \frac{\alpha n}{40NC} + \frac{1}{20} \\
%\left(1+q^{n/2}\right) &\leq& \left(\frac{\alpha n}{40NC} + \frac{1}{20}\right)^{\frac{1}{n-1}} \\
%q^{n/2} &\leq& \left(\frac{\alpha n}{40NC} + \frac{1}{20}\right)^{\frac{1}{n-1}} - 1 \\
\end{eqnarray*}

Then rearranging these terms with respect to $q$,
\begin{eqnarray*}
q &\leq& \left(\left(\frac{\alpha n}{40NC} + \frac{1}{20}\right)^{\frac{1}{n-1}} - 1 \right)^{2/n}\\
\end{eqnarray*}
In terms of the probability of pairwise similarity observation ($p$, where $p = 1-q$),
\begin{eqnarray}
\label{eqn:boundB}
p &\geq& 1-\left(\left(\frac{\alpha n}{40NC} + \frac{1}{20}\right)^{\frac{1}{n-1}} - 1 \right)^{2/n}
\end{eqnarray}

Combining the bounds in Equations~\ref{eqn:boundA} and~\ref{eqn:boundB}, we find that for a given choice of $C\geq0$,
\begin{eqnarray}
\label{eqn:boundC}
p\geq \max\left\{1-C^{2/n}, 1-\left(\left(\frac{\alpha n}{40NC} + \frac{1}{20}\right)^{\frac{1}{{n-1}}} - 1 \right)^{2/n}\right\}
\end{eqnarray}
We note that for any choice of $\alpha,n,N$, the first term is monotonically decreasing in $C$, while second term is monotonically increasing in $C$.  To simplify the analysis, we take,
\begin{eqnarray}
C = \frac{\alpha n}{78 N}
\end{eqnarray}
Plugging this value of C into Equation~\ref{eqn:boundC} gives us the result.

\subsection{Proof of Theorem~\ref{thm:constantThm}}
Using Theorem~\ref{thm:global}, we find the required probability of similarity observation for a given $n,N$.  Given that $n \geq \delta N^\beta \geq 4$, we can state,
\begin{eqnarray}
\label{eqn:appendixP}
p&\geq& \max\left\{1-\left(\frac{\alpha \delta N^\beta}{78N}\right)^{2/\left({\delta N^\beta}\right)}, 1-\left(2^{\frac{1}{\delta N^\beta-1}} - 1 \right)^{2/\left({\delta N^\beta}\right)}, 1-\left(\frac{\alpha}{2\left(N-\delta N^\beta\right)}\right)^{1/\left({\delta N^\beta}\right)}\right\}
\end{eqnarray}

We now show that the specified sampling probability rate, $\frac{2 \kappa}{\delta}N^{-\beta}\log{N}$, is greater than all terms inside this bound.

\subsubsection{First Term Derivation}
The first term of Equation~\ref{eqn:appendixP} is satisfied if,
\begin{eqnarray*}
\frac{2 \kappa}{\delta}N^{-\beta}\log{N} &\geq& 1-\left(\frac{\alpha \delta N^\beta}{78N}\right)^{2/\left({\delta N^\beta}\right)} \\
\frac{{\delta N^\beta}}{2}\log\left(1-\frac{2 \kappa}{\delta}N^{-\beta}\log{N}\right) &\leq& \log\left(\frac{\alpha \delta N^{\beta-1}}{78}\right)
\end{eqnarray*}
Bounding the logarithm function, we can state that $\log\left(1-\frac{2 \kappa}{\delta}N^{-\beta}\log{N}\right) \leq \frac{-2 \kappa}{\delta}N^{-\beta}\log{N}$.  Therefore,
\begin{eqnarray*}
\frac{{\delta N^\beta}}{2}\left(\frac{-2 \kappa}{\delta}N^{-\beta}\log{N}\right) &\leq&  \log\left(\frac{\alpha \delta N^{\beta-1}}{78}\right) \\
%-\kappa\log{N} &\leq&  \log\left(\frac{\alpha \delta}{78}\right) + \left(\beta-1\right) \log{N} \\
\end{eqnarray*}
Then,
\begin{eqnarray*}
\left(1-\beta-\kappa\right)\log{N} &\leq&  \log\left(\frac{\alpha \delta}{78}\right)
\end{eqnarray*}
Therefore, this bound holds if $N \geq \left(\frac{\alpha \delta}{78}\right)^{{1}/{\left(1-\beta-\kappa\right)}}$.

\subsubsection{Second Term Derivation}
The second term,
\begin{eqnarray*}
\frac{2 \kappa}{\delta}N^{-\beta}\log{N} &\geq& 1-\left(2^{\frac{1}{\delta N^\beta-1}} - 1 \right)^{2/\left({\delta N^\beta}\right)} \\
\frac{\delta N^\beta}{2}\log\left(1-\frac{2 \kappa}{\delta}N^{-\beta}\log{N}\right) &\leq& \log\left(2^{\frac{1}{\delta N^\beta-1}} - 1 \right)
\end{eqnarray*}
Again, bounding the log function,
\begin{eqnarray*}
\frac{\delta N^\beta}{2}\left(-\frac{2 \kappa}{\delta}N^{-\beta}\log{N}\right) &\leq& \log\left(2^{\frac{1}{\delta N^\beta-1}} - 1 \right)\\
-\kappa\log{N} &\leq& \log\left(2^{\frac{1}{\delta N^\beta-1}} - 1 \right)\\
\end{eqnarray*}
We can then lower bound the term $\log\left(2^{\frac{1}{\delta N^\beta-1}} - 1 \right)$.  Given that $2^{\frac{1}{\delta N^\beta-1}} - 1 \in \left(\frac{1}{N},1\right)$ for any $\delta N^\beta > 1$,
\begin{eqnarray*}
\log\left(2^{\frac{1}{\delta N^\beta-1}} - 1 \right) &\geq& \frac{\log{1} - \log{\frac{1}{N}}}{1 - \frac{1}{N}}\left(2^{\frac{1}{\delta N^\beta-1}}-2\right) \\
&=& \frac{N\log{\left(N\right)}}{N-1}\left(2^{\frac{1}{\delta N^\beta-1}}-2\right)
\end{eqnarray*}

Then,
\begin{eqnarray*}
-\kappa\log{N} &\leq& \frac{N\log{\left(N\right)}}{N-1}\left(2^{\frac{1}{\delta N^\beta-1}}-2\right) \\
%-\kappa &\leq& \frac{N}{N-1}\left(2^{\frac{1}{\delta N^\beta-1}}-2\right) \\
\end{eqnarray*}
And rearranging this term,
\begin{eqnarray*}
-\kappa + \frac{2N}{N-1} &\leq& \frac{N}{N-1}\left(2^{\frac{1}{\delta N^\beta-1}}\right) \\
\end{eqnarray*}
Given that the right hand side is always greater than zero, we can then find that the second term holds if,
\begin{eqnarray*}
%-\kappa + \frac{2N}{N-1} &\leq& 0 \\
\kappa &\geq& \frac{2N}{N-1}
\end{eqnarray*}
Therefore, the second term holds if $\kappa\geq 3$ and $N\geq 4$.

\subsubsection{Third Term Derivation}
And finally, the third term can be bounded as,
\begin{eqnarray*}
\frac{2 \kappa}{\delta}N^{-\beta}\log{N} &\geq& 1-\left(\frac{\alpha}{2\left(N-\delta N^\beta\right)}\right)^{1/\left({\delta N^\beta}\right)} \\
\left({\delta N^\beta}\right)\log\left({1-\frac{2 \kappa}{\delta}N^{-\beta}\log{N}}\right) &\leq& \log\left(\frac{\alpha}{2\left(N-\delta N^\beta\right)}\right)
\end{eqnarray*}
Bounding the log function,
\begin{eqnarray*}
\left(-{\delta N^\beta}\right)\frac{2 \kappa}{\delta}N^{-\beta}\log{N} = -2\kappa\log{N} &\leq& \log\left(\frac{\alpha}{2\left(N-\delta N^\beta\right)}\right)
\end{eqnarray*}
Then,
\begin{eqnarray*}
N^{-2\kappa} &\leq& \frac{\alpha}{2\left(N-\delta N^\beta\right)} \\
N^{-2\kappa}\left(N-\delta N^\beta\right) = N^{1-2\kappa}\left(1-\delta N^{\beta-1}\right) \leq N^{1-2\kappa} &\leq& \frac{\alpha}{2}
\end{eqnarray*}
Then we find that this term is satisfied if $N\geq \left(\frac{\alpha}{2}\right)^{1/\left(1-2\kappa\right)}$

By combining all three bounds, we find that sampling with probability $\frac{2 \kappa}{\delta}N^{-\beta}\log{N}$ will resolve a pruning of the hierarchical clustering down to clusters of size $\geq \delta N^{\beta} \geq 4$, if $\kappa\geq 3$ and $N\geq\max\{4,\left(\frac{\alpha}{2}\right)^{1/\left(1-2\kappa\right)},\left(\frac{\alpha \delta}{78}\right)^{{1}/{\left(1-\beta-\kappa\right)}}\}$.

\bibliographystyle{IEEEtran}
\bibliography{nipsCiteNew}

\end{document}